\useunder{\uline}{\ul}{}
\newcommand{\revision}[1]{#1}
\let\classAND\AND
\let\AND\relax
\let\AND\classAND
\theoremstyle{plain}
\theoremstyle{definition}
\theoremstyle{remark}
\def\eqref#1{equation~\ref{#1}}
\def\1{\bm{1}}
\DeclareMathAlphabet{\mathsfit}{\encodingdefault}{\sfdefault}{m}{sl}
\SetMathAlphabet{\mathsfit}{bold}{\encodingdefault}{\sfdefault}{bx}{n}
\title{TSMixer: An All-MLP Architecture for Time Series Forecasting}
\author{\name Si-An Chen \email d09922007@ntu.edu.tw \\
      \addr National Taiwan University\\
      Google Cloud AI Research
      \AND
      \name Chun-Liang Li \email chunliang@google.com \\
      \addr Google Cloud AI Research
      \AND
      \name Nathanael C. Yoder \email nyoder@google.com \\
      \addr Google Cloud AI Research \\
      \AND
      \name Sercan \"{O}. Ar{\i}k \email soarik@google.com \\
      \addr Google Cloud AI Research \\
      \AND
      \name Tomas Pfister \email tpfister@google.com \\
      \addr Google Cloud AI Research}
\begin{document}

\maketitle

\begin{abstract}
Real-world time-series datasets are often multivariate with complex dynamics. 
To capture this complexity, high capacity architectures like recurrent- or attention-based sequential deep learning models have become popular.
However, recent work demonstrates that simple univariate linear models can outperform such deep learning models on several commonly used academic benchmarks. 
Extending them, in this paper, we investigate the capabilities of linear models for time-series forecasting and present Time-Series Mixer (TSMixer), a novel architecture designed by stacking multi-layer perceptrons (MLPs). 
TSMixer is based on mixing operations along both the time and feature dimensions to extract information efficiently.
On popular academic benchmarks, the simple-to-implement TSMixer is comparable to specialized state-of-the-art models that leverage the inductive biases of specific benchmarks.
On the challenging and large scale M5 benchmark, a real-world retail dataset, TSMixer demonstrates superior performance compared to the state-of-the-art alternatives.
Our results underline the importance of efficiently utilizing cross-variate and auxiliary information for improving the performance of time series forecasting.
We present various analyses to shed light into the capabilities of TSMixer.
The design paradigms utilized in TSMixer are expected to open new horizons for deep learning-based time series forecasting. 
%\revision{The implementation is available at: \textit{[url is hidden for double-blind review].}} %\url{https://github.com/google-research/google-research/tree/master/tsmixer}.
The implementation is available at: \url{https://github.com/google-research/google-research/tree/master/tsmixer}.
\end{abstract}

\begin{figure}[tb]
\begin{center}
\centerline{\includegraphics[width=\columnwidth]{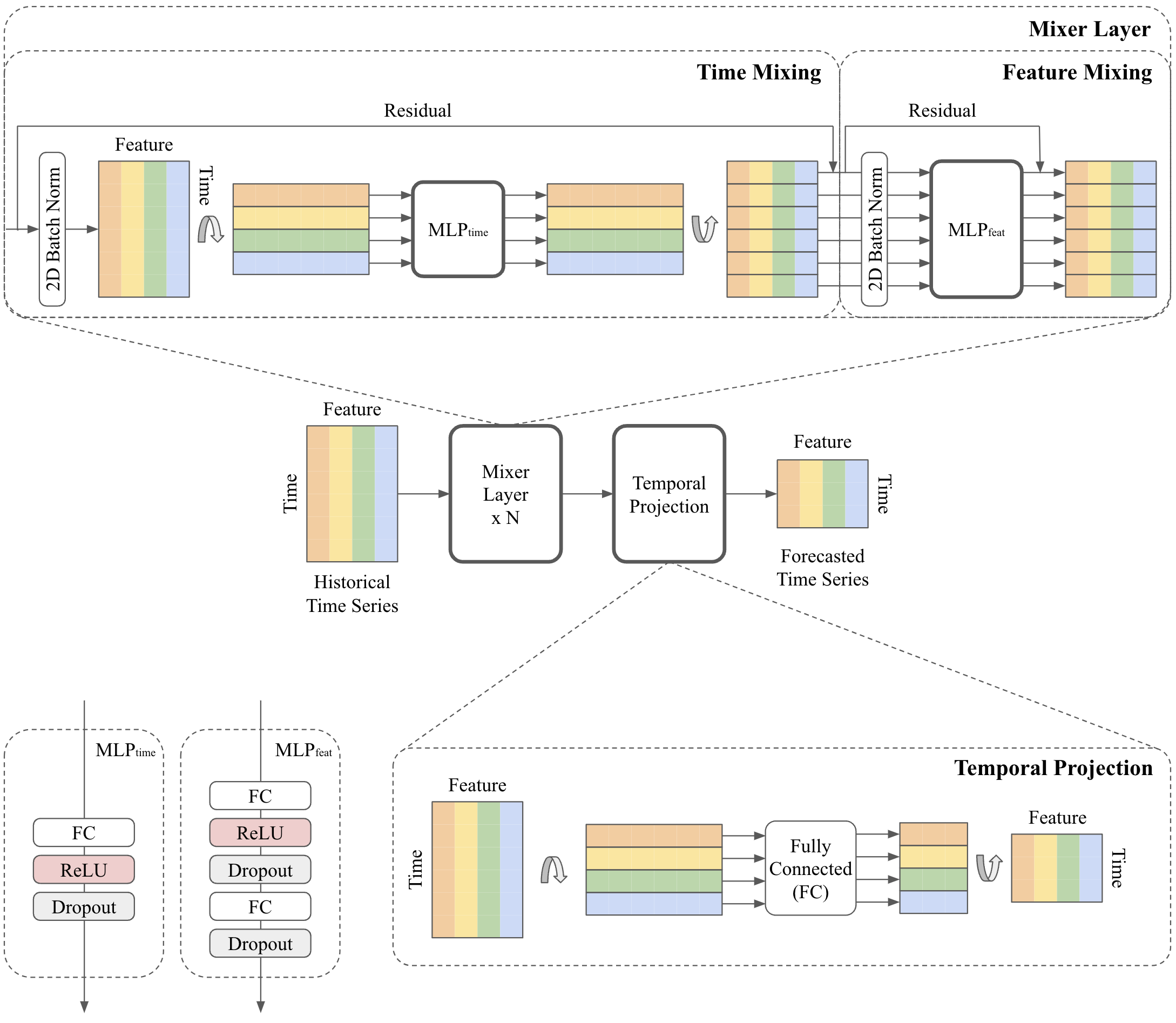}}
\caption{TSMixer for multivariate time series forecasting. The columns of the inputs means different features/variates and the rows are time steps.
The fully-connected operations are row-wise.
TSMixer contains interleaving time-mixing and feature-mixing MLPs to aggregate information.
\revision{The number of mixer layer is denoted as $N$.}
The time-mixing MLPs are shared across all features and the feature-mixing MLPs are shared across all of the time steps. 
The design allow TSMixer to automatically adapt the use of both temporal and cross-variate information with limited number of parameters for superior generalization.
The extension with auxiliary information is also explored in this paper.}
\label{fig:tsmixer_simple}
\end{center}
\end{figure}

\section{Introduction}
Time series forecasting is a prevalent problem in numerous real-world use cases, such as for forecasting of demand of products~\citep{BJ17, CP99}, pandemic spread~\citep{ZJ18}, and inflation rates~\citep{CC10}.
The forecastability of time series data often originates from three major aspects:
\begin{itemize}
    \item Persistent temporal patterns: encompassing trends and seasonal patterns, e.g., long-term inflation, day-of-week effects;
    \item Cross-variate information: correlations between different variables, e.g., an increase in blood pressure associated with a rise in body weight;
    \item Auxiliary features: comprising static features and future information, e.g., product categories and promotional events.
\end{itemize}
Traditional models, such as ARIMA~\citep{BG70}, are designed for univariate time series, where only temporal information is available.
Therefore, they face limitations when dealing with challenging real-world data, which often contains complex cross-variate information and auxiliary features.
In contrast, numerous deep learning models, particularly Transformer-based models, have been proposed due to their capacity to capture both complex temporal patterns and cross-variate dependencies~\citep{GJ17, SL19, WR17, HZ21, HW21, LB21b, SL22,TZ22,LY22,TZ22b} .

The natural intuition is that multivariate models, such as those based on Transformer architectures, should be more effective than univariate models due to their ability to leverage cross-variate information.
However, \citet{AZ22} revealed that this is not always the case -- Transformer-based models can indeed be significantly worse than simple univariate temporal linear models on many commonly used forecasting benchmarks.
The multivariate models seem to suffer from overfitting especially when the target time series is not correlated with other covariates.
This surprising finding has raised two essential questions:
\begin{enumerate}
    \item Does cross-variate information truly provide a benefit for time series forecasting?
    \item When cross-variate information is not beneficial, can multivariate models still perform as well as univariate models?
\end{enumerate}

To address these questions, we begin by analyzing the effectiveness of temporal linear models.
Our findings indicate that their time-step-dependent characteristics render temporal linear models great candidates for learning temporal patterns under common assumptions.
Consequently, we gradually increase the capacity of linear models by
\begin{enumerate}
    \item stacking temporal linear models with non-linearities (TMix-Only),
    \item introducing cross-variate feed-forward layers (TSMixer).
\end{enumerate}
The resulting TSMixer alternatively applies MLPs across time and feature dimensions, conceptually corresponding to \emph{time-mixing} and \emph{feature-mixing} operations, efficiently capturing both temporal patterns and cross-variate information, as illustrated in Fig.~\ref{fig:tsmixer_simple}. 
The residual designs ensure that TSMixer retains the capacity of temporal linear models while still being able to exploit cross-variate information.

We evaluate TSMixer on commonly used long-term forecasting datasets~\citep{HW21} where univariate models have outperformed multivariate models.
Our ablation study demonstrates the effectiveness of stacking temporal linear models and validates that cross-variate information is less beneficial on these popular datasets, explaining the superior performance of univariate models.
Even so, TSMixer is on par with state-of-the-art univariate models and significantly outperforms other multivariate models.

To demonstrate the benefit of multivariate models, we further evaluate TSMixer on the challenging M5 benchmark, a large-scale retail dataset used in the M-competition~\citep{SM22}. M5 contains crucial cross-variate interactions such as sell prices~\citep{SM22}.
The results show that cross-variate information indeed brings significant improvement, and TSMixer can effectively leverage this information.
Furthermore, we propose a principle design to extend TSMixer to handle auxiliary information such as static features and future time-varying features.
It aligns the different types of features into the same shape then applied mixer layers on the concatenated features to leverage the interactions between them.
In this more practical and challenging setting, TSMixer outperforms models that are popular in industrial applications, including DeepAR~(\citealt{SD20}, Amazon SageMaker) and TFT~(\citealt{LB21}, Google Cloud Vertex), demonstrating its strong potential for real world impact.

\revision{
We summarize our contributions as below:
\begin{itemize}
    \item We analyze the effectiveness of state-of-the-art linear models and indicate that their time-step-dependent characteristics make them great candidates for learning temporal patterns under common assumptions.
    \item We propose TSMixer, an innovative architecture which retains the capacity of linear models to capture temporal patterns while still being able to exploit cross-variate information.
    \item We point out the potential risk of evaluating multivariate models on common long-term forecasting benchmarks.
    \item Our empirical studies demonstrate that TSMixer is the first multivariate model which is on par with univariate models on common benchmarks and achieves state-of-the-art on a large-scale industrial application where cross-variate information is crucial.
\end{itemize}
}

\begin{table}[tb]
\caption{Recent works in time series forecasting. Category I is univariate time series forecasting; Category II is multivariate time series forecasting, and Category III is time series forecasting with auxiliary information. In this work, we propose TSMixer for Category II. We also extend TSMixer to leverage auxiliary information including static and future time-varying features for Category III.}
\label{table:recent_work}
\centering
\setlength{\tabcolsep}{2pt}
\resizebox{\textwidth}{!}{%
\begin{tabular}{c|c|c|c|l}
\hline
\multirow{3}{*}{Category} & Extrapolating & Consideration of & Consideration of & \multicolumn{1}{c}{\multirow{3}{*}{Models}} \\
 & temporal patterns & cross-variate information & auxiliary features & \multicolumn{1}{c}{} \\
 &  & (i.e. multivariateness) &  & \multicolumn{1}{c}{} \\ \hline
\multirow{4}{*}{I} & \multirow{4}{*}{\ding{52}} & \multirow{4}{*}{} & \multirow{4}{*}{} & ARIMA~\citep{BG70} \\
 &  &  &  & N-BEATS~\citep{BO20} \\
 &  &  &  & LTSF-Linear~\citep{AZ22} \\
 &  &  &  & PatchTST~\citep{YN23} \\ \hline
\multirow{7}{*}{II} & \multirow{7}{*}{\ding{52}} & \multirow{7}{*}{\ding{52}} & \multirow{7}{*}{} & Informer~\citep{HZ21} \\
 &  &  &  & Autoformer~\citep{HW21} \\
 &  &  &  & Pyraformer~\citep{SL22} \\
 &  &  &  & FEDformer~\citep{TZ22} \\
 &  &  &  & NS-Transformer~\citep{LY22} \\
 &  &  &  & FiLM~\citep{TZ22b} \\
 &  &  &  & \textbf{TSMixer} (this work) \\ \hline
\multirow{5}{*}{III} & \multirow{5}{*}{\ding{52}} & \multirow{5}{*}{\ding{52}} & \multirow{5}{*}{\ding{52}} & MQRNN~\citep{WR17} \\
 &  &  &  & DSSM~\citep{RS18} \\
 &  &  &  & DeepAR~\citep{SD20} \\
 &  &  &  & TFT~\citep{LB21} \\
 &  &  &  & \textbf{TSMixer-Ext} (this work) \\ \hline
\end{tabular}%
}
\end{table}

\section{Related Work}
\label{sec:related_work}

Broadly, time series forecasting is the task of predicting future values of a variable or multiple related variables, given a set of historical observations.
Deep neural networks have been widely investigated for this task~\citep{ZG98, KN13, LB21b}.
In Table~\ref{table:recent_work} we coarsely split notable works into three categories based on the information considered by the model: (I) univariate forecasting, (II) multivariate forecasting, and (III) multivariate forecasting with auxiliary information.

%\textbf{Multivariate Time Series Forecasting}
Multivariate time series forecasting with deep neural networks has been getting increasingly popular with the motivation that \emph{modeling the complex relationships between covariates should improve the forecasting performance}.
%These works focus on longer prediction length in multivariate time series. 
%but do not consider the auxiliary information such as static and future time-varying features.
Transformer-based models (Category II) are common choices for this scenario because of their superior performance in modeling long and complex sequential data~\citep{VA17}.
%Many improvements have been studied recently.
Various variants of Transformers have been proposed to further improve efficiency and accuracy. 
%To tackle the efficiency bottleneck, 
Informer~\citep{HZ21} and Autoformer~\citep{HW21} tackle the efficiency bottleneck with different attention designs costing less memory usage for long-term forecasting.
FEDformer~\citep{TZ22} and FiLM~\citep{TZ22b} decompose the sequences using  Fast Fourier Transformation for better extraction of long-term information.
There are also extensions on improving specific challenges, such as non-stationarity~\citep{TK22,LY22}. 
Despite the advances in Transformer-based models for multivariate forecasting, \citet{AZ22} indeed show the counter-intuitive result that a simple univariate linear model (Category I), which treats multivariate data as several univariate sequences, can outperform all of the proposed multivariate Transformer models by a significant margin on commonly-used long-term forecasting benchmarks.
Similarly, \citet{YN23} advocate against modeling the cross-variate information and propose a univariate patch Transformer for multivariate forecasting tasks and show state-of-the-art accuracy on multiple datasets.
As one of the core contributions, instead, we find that this conclusion mainly comes from the dataset bias, and might not generalize well to some real-world applications.
%PatchTST~\citep{YN23} further improves the performance of univariate models by applying a standard Transformer on patched univariate sequences.
%These findings raise the issue that multivariate models may not leverage cross-variate information effectively and, instead, tend to over-fit the data.

%\textbf{Time Series Forecasting with Auxiliary Information}
There are other works that consider a scenario when auxiliary information ((Category III)), such as static features (e.g. location) and future time-varying features (e.g. promotion in coming weeks), are available.
Commonly used forecasting models have been extended to handle these auxiliary features. 
These include state-space models~\citep{RS18, AA19, AG22}, RNN variants~\cite{WR17,SD20}, and attention models~\cite{LB21}.
%Some models use state-space models to learn the latent dynamics~\citep{RS18, AA19, AG22},
%while others leverage advanced sequence architectures such as RNN or attention mechanisms~\citep{VA17}.
%Examples of these models include MQRNN~\citep{WR17} and DeepAR~\citep{SD20}, which use long short-term memory (LTSM, \citealt{HS97}) to model the sequence distribution,
%and the Temporal Fusion Transformer (TFT,~\citealt{LB21}), which combines attention mechanisms and gating techniques to achieve interpretability through automatic feature selection.
Most real-world time-series datasets are more aligned with this setting and that is why these deep learning models have achieved great success in various applications and are widely used in industry~(e.g. DeepAR~\citep{SD20} of AWS SageMaker and TFT~\citep{LB21} of Google Cloud Vertex). 
One drawback of these models is their complexity, particularly when compared to the aforementioned univariate models. 

% \cl{probably we should talk about Mixers for different data}
Our motivations for TSMixer stem from analyzing the performance of linear models for time series forecasting. Similar architectures have been considered for other data types before, for example the proposed TSMixer in a way resembles the well-known MLP Mixer architecture, from computer vision~\citep{IT21}.
Mixer models have also been applied to text~\citep{FF22}, speech~\citep{TO22}, network traffic~\citep{ZY22} and point cloud~\citep{CJ22}. 
%\citet{WW22} use mixers for specific applications together with other data modalities.
Yet, to the best of our knowledge, the use of an MLP Mixer based architecture for time series forecasting has not been explored in the literature. 

\section{Linear Models for Time Series Forecasting}
\label{sec:linear}

The superiority of linear models over more complex sequential architectures, like Transformers, has been empirically demonstrated \citet{AZ22}.
We first provide theoretical insights on the capacity of linear models which might have been overlooked due to its simplicity compared to other sequential models.
We then compare linear models with other architectures and show that linear models have a characteristic not present in RNNs and Transformers -- they have the appropriate representation capacity to learn the time dependency for a univariate time series. This finding motivates the design of our proposed architecture, presented in Sec.~\ref{sec:arch}.

{\bf Notation: } 
Let the historical observations be  $\boldsymbol{X} \in \mathbb{R}^{L \times C_x}$, where $L$ is the length of the lookback window and $C_x$ is the number of variables.
We consider the task of predicting $\boldsymbol{Y} \in \mathbb{R}^{T \times C_y}$, where $T$ is the number of future time steps and $C_y$ is the number of time series we want to predict.
In this work, we focus on the case when the past values of the target time series are included in the historical observation ($C_y \leq C_x$).
A linear model learns parameters $\boldsymbol{A} \in \mathbb{R}^{T \times L}, \boldsymbol{b} \in \mathbb{R}^{T \times 1}$ to predict the values of the next $T$ steps as:
\begin{equation}
    \hat{\boldsymbol{Y}} = \boldsymbol{A}\boldsymbol{X} \oplus \boldsymbol{b} \in \mathbb{R}^{T \times C_x},
\end{equation}
where $\oplus$ means column-wise addition. The corresponding $C_y$ columns in $\hat{\boldsymbol{Y}}$can be used to predict $\boldsymbol{Y}$.

\paragraph{Theoretical insights:}
For time series forecasting, most impactful real-world applications have either smoothness or periodicity in them, as otherwise the predictability is low and the predictive models would not be reliable.
First, we consider the common assumption that the time series is periodic~\citep{HC04, ZG05}.
Given an arbitrary periodic function $x(t) = x(t-P)$, where $P<L$ is the period. There is a solution of linear models to perfectly predict the future values as follows:
\begin{equation}
    \boldsymbol{A}_{ij} =
    \begin{cases}
        1, & \text{if $j = L - P + (i \bmod P)$}\\
        0, & \text{otherwise}
    \end{cases}, \boldsymbol{b}_i = 0.
\end{equation}
When extending to affine-transformed periodic sequences, $x(t) = a \cdot x(t-P) + c$, where $a, c \in \mathbb{R}$ are constants, the linear model still has a solution for perfect prediction:
\begin{equation}
    \boldsymbol{A}_{ij} =
    \begin{cases}
        a, & \text{if $j = L - P + (i \bmod P)$}\\
        0, & \text{otherwise}
    \end{cases}, \boldsymbol{b}_i = c.
\end{equation}
A more general assumption is that the time series can be decomposed into a periodic sequence and a sequence with smooth trend~\citep{HC04, ZG05,HW21,TZ22}.
In this case, we show the following property (see the proof in Appendix~\ref{appendix:proof}):
\begin{restatable}{theorem}{smooth}
\label{thm:smooth}
Let $x(t) = g(t) + f(t)$, where $g(t)$ is a periodic signal with period $P$ and $f(t)$ is Lipschitz smooth with constant $K$ (i.e. $\left| \frac{f(a) - f(b)}{a-b} \right| \leq K$), then there exists a linear model with lookback window size $L \geq P + 1$ such that $|y_i - \hat{y}_i| \leq K(i + \min(i, P)), \forall i = 1, \dots, T$.
\end{restatable}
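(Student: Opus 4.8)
The plan is to exhibit an explicit weight matrix $\boldsymbol{A}\in\mathbb{R}^{T\times L}$ and bias $\boldsymbol{b}\in\mathbb{R}^{T\times 1}$ and to verify the error bound one row at a time, splitting into the short-horizon regime $1\le i\le P$ and the long-horizon regime $i>P$; since the rows of $\boldsymbol{A}$ are unconstrained, using different formulas in the two regimes still describes a single linear model. Index the lookback window as $x_j=x(j)$ for $j=1,\dots,L$, the targets as $y_i=x(L+i)$, and fix any $L\ge P+1$, so that $L-P$ is itself a valid in-window index. Two elementary facts drive the argument: periodicity gives $g(s)=g(s')$ whenever $s\equiv s'\pmod P$; and Lipschitzness gives $|f(a)-f(b)|\le K|a-b|$, which, writing the seasonal difference $h(t):=f(t)-f(t-P)$, upgrades by the triangle inequality to $|h(a)-h(b)|\le 2K|a-b|$.

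For the long-horizon rows $i>P$ I would use plain seasonal copying. Let $j^\star_i:=L+i-P\lceil i/P\rceil$; one checks that $j^\star_i\in\{L-P+1,\dots,L\}$ (this is where $L\ge P+1$ enters), and set $\boldsymbol{A}_{i,j^\star_i}=1$ with all other entries of that row, and $\boldsymbol{b}_i$, equal to $0$. Since $j^\star_i\equiv L+i\pmod P$, periodicity gives $\hat y_i=g(L+i)+f(j^\star_i)$, so $|y_i-\hat y_i|=|f(L+i)-f(j^\star_i)|\le K\,P\lceil i/P\rceil\le K(i+P)=K(i+\min(i,P))$, using $P\lceil i/P\rceil\le i+P$.

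For the short-horizon rows $1\le i\le P$, plain copying only yields error $KP$, which is too weak for small $i$, so I would instead use a level-corrected seasonal predictor: $\hat y_i:=x_L+x_{L+i-P}-x_{L-P}$ with $\boldsymbol{b}_i=0$ (when $i=P$ the first two terms coincide, giving coefficient $2$ at position $L$; $L\ge P+1$ places all three indices in $\{1,\dots,L\}$). Substituting $x=g+f$ and using $g(L)=g(L-P)$ together with $g(L+i-P)=g(L+i)$ collapses the periodic part to $g(L+i)$, leaving $\hat y_i=g(L+i)+f(L)+f(L+i-P)-f(L-P)=g(L+i)+f(L+i)-\bigl(h(L+i)-h(L)\bigr)$. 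Hence $|y_i-\hat y_i|=|h(L+i)-h(L)|\le 2Ki=K(i+\min(i,P))$. Combining the two regimes proves the theorem.

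I expect the only genuine obstacle to be spotting the short-horizon predictor: one must notice that pure seasonal copying is wasteful when $i$ is small, and that anchoring the forecast at the last observed value $x_L$ and then applying the seasonal offset $x_{L+i-P}-x_{L-P}$ turns the error into a difference $h(L+i)-h(L)$ of seasonal differences evaluated at points only $i$ apart, which the estimate $|h(a)-h(b)|\le 2K|a-b|$ controls linearly in $i$. Everything else is routine bookkeeping: checking that every index referenced by $\boldsymbol{A}$ lies in $\{1,\dots,L\}$ — which is exactly where the hypothesis $L\ge P+1$ is used — and the inequality $P\lceil i/P\rceil\le i+P$.
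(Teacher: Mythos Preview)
Your proof is correct. The paper takes a slightly cleaner route: it uses your short-horizon, level-corrected predictor for \emph{all} rows, with the seasonal index wrapped modulo $P$. Concretely, with $L=P+1$ it sets
\[
\hat y_i \;=\; x_{P+1} \;+\; x_{(i\bmod P)+1} \;-\; x_1,
\]
so a single three-term formula handles every $i$. The residual collapses exactly as in your short-horizon computation to
\[
y_i-\hat y_i \;=\; \bigl(f(P{+}i{+}1)-f(P{+}1)\bigr)-\bigl(f((i\bmod P){+}1)-f(1)\bigr),
\]
which by Lipschitz is at most $K\bigl(i+(i\bmod P)\bigr)\le K\bigl(i+\min(i,P)\bigr)$. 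In other words, the same idea you identified as the ``only genuine obstacle'' already suffices globally once the seasonal offset index is taken modulo $P$; there is no need for a separate plain-copy regime when $i>P$, nor for the auxiliary function $h$ or the inequality $P\lceil i/P\rceil\le i+P$. Your two-regime construction is still a valid linear model and meets the stated bound, but the paper's version is shorter and even gives the marginally sharper residual $K(i+(i\bmod P))$.
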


%Although linear models look naive at the first glance, 
This derivation illustrates that linear models constitute strong candidates to capture temporal relationships.
For the non-periodic patterns, as long as they are smooth, which is often the case in practice, the error is still bounded given an adequate lookback window size.
%As a more complicated scenario, the time series may be a superposition of multiple periodic signals, in which case it would still be periodic and the linear model can still achieve high accuracy given an adequate lookback window size.

\paragraph{Differences from conventional deep learning models.}
Following the discussions in~\citet{AZ22} and~\citet{YN23}, our analysis of linear models offers deeper insights into why previous deep learning models tend to overfit the data.
Linear models possess a unique characteristic wherein the weights of the mapping are fixed for each time step in the input sequence.
This ``time-step-dependent'' characteristic is a crucial component of our previous findings and stands in contrast to recurrent or attention-based architectures, where the weights over the input sequence are outputs of a "data-dependent" function, such as the gates in LSTMs or attention layers in Transformers.
Time-step-dependent models vs. data-dependent models are illustrated in Fig.~\ref{fig:dependent}.
The time-step-dependent linear model, despite its simplicity, proves to be highly effective in modeling temporal patterns.
Conversely, even though recurrent or attention architectures have high representational capacity, achieving time-step independence is challenging for them.
They usually overfit on the data instead of solely considering the positions.
This unique property of linear models may help explain the results in~\citet{AZ22}, where no other method was shown to match the performance of the linear model.

\revision{
\paragraph{Limitations of the analysis.}
The purpose of the analysis is to understand the effectiveness of temporal linear models in univariate scenario.
Real-world time series data might have high volatility, making the patterns non-periodic and non-smooth. In such scenarios, relying solely on past-observed temporal patterns might be suboptimal. 
The analysis beyond Lipschitz cases could be challenging and out of the scope of this paper~\citep{ZT23}, so we leave the analysis for more complex cases for the future work.
Nevertheless, the analysis motivates us to develop a more powerful model based on linear models, which are introduced in Section~\ref{sec:arch}. We also show the importance of effectively utilizing multivariate information as other covariates might contain the information that can be used to model volatility -- indeed our results in Table 5 underline that.
}

\begin{figure}[tb]
    \centering
\begin{tikzpicture}
\Vertex[x=0,label=$x_{t-2}$, fontscale=.8]{x1}
\Vertex[x=1.5,label=$x_{t-1}$, fontscale=.8]{x2}
\Vertex[x=3,label=$x_{t}$, fontscale=.8]{x3}
\Vertex[x=1.5,y=1.8,label=$x_{t+1}$, fontscale=.8]{y1}
\Text[x=1.5,y=2.5]{$x_{t+1} = \sum_{i=1}^t w_i x_i$}
\Edge[Direct, label=$w_{t-2}$, fontscale=.8](x1)(y1)
\Edge[Direct, label=$w_{t-1}$, fontscale=.8](x2)(y1)
\Edge[Direct, label=$w_{t}$, fontscale=.8](x3)(y1)
\Text[x=1.5,y=-0.8]{Time-step-dependent}

\Vertex[x=7,label=$x_{t-2}$, fontscale=.8]{x4}
\Vertex[x=8.5,label=$x_{t-1}$, fontscale=.8]{x5}
\Vertex[x=10,label=$x_{t}$, fontscale=.8]{x6}
\Vertex[x=8.5,y=1.8,label=$x_{t+1}$, fontscale=.8]{y2}
\Text[x=8.5,y=2.5]{$x_{t+1} = \sum_{i=1}^t f_i(\boldsymbol{x})x_i$}
\Edge[Direct, label=$f_{t-2}(\boldsymbol{x})$, fontscale=.8](x4)(y2)
\Edge[Direct, label=$f_{t-1}(\boldsymbol{x})$, fontscale=.8](x5)(y2)
\Edge[Direct, label=$f_{t}(\boldsymbol{x})$, fontscale=.8](x6)(y2)
\Text[x=8.5,y=-0.8]{Data-dependent}
\end{tikzpicture}    
    \caption{Illustrations of time-step-dependent  and  data-dependent models within a single forecasting time step.}
    \label{fig:dependent}
\end{figure}
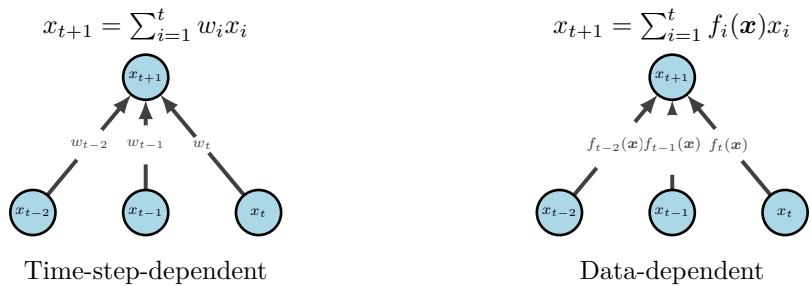

\section{TSMixer Architecture}
\label{sec:arch}
Expanding upon our finding that linear models can serve as strong candidates for capturing time dependencies, we initially propose a natural enhancement by stacking linear models with non-linearities to form multi-layer perceptrons (MLPs).
Common deep learning techniques, such as normalization and residual connections, are applied to facilitate efficient learning.
However, this architecture does not take cross-variate information into account.

To better leverage cross-variate information, we propose the application of MLPs in the time-domain and the feature-domain in an alternating manner.
The time-domain MLPs are shared across all of the features, while the feature-domain MLPs are shared across all of the time steps.
This resulting model is akin to the MLP-Mixer architecture from computer vision~\citep{IT21}, with time-domain and feature-domain operations representing time-mixing and feature-mixing operations, respectively.
Consequently, we name our proposed architecture Time-Series Mixer (TSMixer).

The interleaving design between these two operations efficiently utilizes both temporal dependencies and cross-variate information while limiting computational complexity and model size. 
It allows TSMixer to use a long lookback window (see Sec.~\ref{sec:linear}), while maintaining the parameter growth in only $O(L+C)$ instead of $O(LC)$ if fully-connected MLPs were used.
To better understand the utility of cross-variate information and feature-mixing, we also consider a simplified variant of TSMixer that only employs time-mixing, referred to as TMix-Only, which consists of a residual MLP shared across each variate, as illustrated in Fig.~\ref{fig:tmix_only}.
We also present the extension of TSMixer to scenarios where auxiliary information about the time series is available.

\begin{figure}[tb]
\begin{center}
\centerline{\includegraphics[width=0.5\columnwidth]{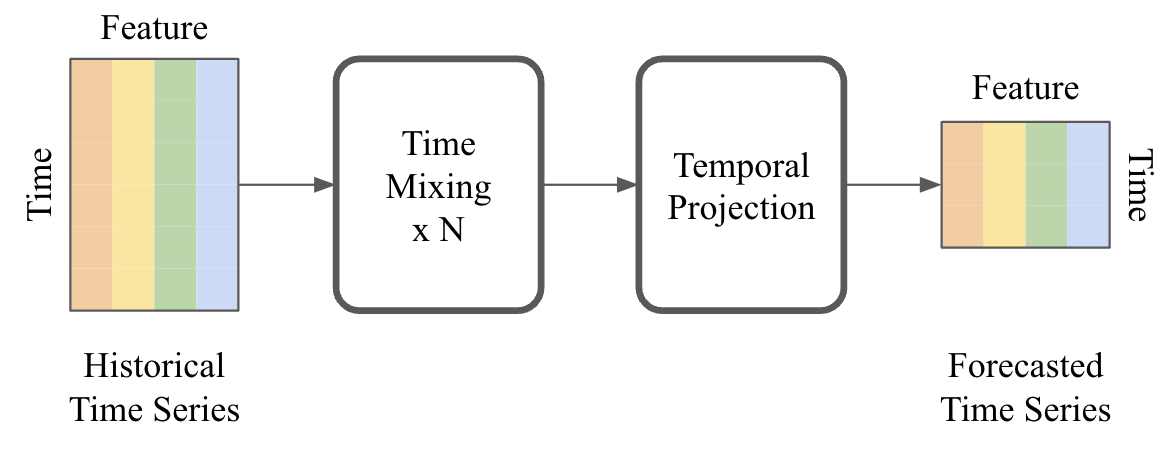}}
\caption{The architecture of TMix-Only. It is similar to TSMixer but only applies time-mixing.}
\label{fig:tmix_only}
\end{center}
\end{figure}

\subsection{TSMixer for Multivariate Time Series Forecasting}
For multivariate time series forecasting where only historical data are available, TSMixer applies MLPs alternatively in time and feature domains.
The architecture is illustrated in Fig.~\ref{fig:tsmixer_simple}.
TSMixer comprises the following components:
\begin{itemize}
    \item \textbf{Time-mixing MLP}: Time-mixing MLPs model temporal patterns in time series. They consist of a fully-connected layer followed by an activation function and dropout. They transpose the input to apply the fully-connected layers along the time domain and shared by features. We employ a single-layer MLP, as demonstrated in Sec.\ref{sec:linear}, where a simple linear model already proves to be a strong model for learning complex temporal patterns.
    \item \textbf{Feature-mixing MLP}: Feature-mixing MLPs are shared by time steps and serve to leverage covariate information. Similar to Transformer-based models, we consider two-layer MLPs to learn complex feature transformations.
    \item \textbf{Temporal Projection}: Temporal projection, identical to the linear models in\citet{AZ22}, is a fully-connected layer applied on time domain. They not only learn the temporal patterns but also map the time series from the original input length $L$ to the target forecast length $T$.
    \item \textbf{Residual Connections}: We apply residual connections between each time-mixing and feature-mixing layer. These connections allow the model to learn deeper architectures more efficiently and allow the model to effectively ignore unnecessary time-mixing and feature-mixing operations.
    \item \textbf{Normalization}: Normalization is a common technique to improve deep learning model training. While the preference between batch normalization and layer normalization is task-dependent,~\citet{YN23} demonstrates the advantages of batch normalization on common time series datasets. In contrast to typical normalization applied along the feature dimension, we apply 2D normalization on both time and feature dimensions due to the presence of time-mixing and feature-mixing operations.
\end{itemize}
Contrary to some recent Transformer advances with increased complexity, the architecture of TSMixer is relatively simple to implement.
Despite its simplicity, we demonstrate in Sec.~\ref{sec:exp} that TSMixer remains competitive with state-of-the-art models at representative benchmarks.

\subsection{Extended TSMixer for Time Series Forecasting with Auxiliary Information}
\label{subsec:tsmixer_aux}

%\paragraph{Notation:}
In addition to the historical observations, many real-world scenarios allow us to have access to static $\boldsymbol{S} \in \mathbb{R}^{1 \times C_s}$ (e.g. location) and future time-varying features $\boldsymbol{Z} \in \mathbb{R}^{T \times C_z}$ (e.g. promotion in subsequent weeks).
\revision{The problem can also be extended to multiple time series, represented by ${\boldsymbol{X}^{(i)}}_{i=1}^M$, where $M$ is the number of time series, with each time series is associated with its own set of features.}
Most recent work, especially those focus on long-term forecasting, only consider the historical features and targets on all variables (i.e. $C_x = C_y > 1, C_s = C_z = 0$).
In this paper, we also consider the case where auxiliary information is available (i.e. $C_s > 0, C_z > 0$).

To leverage the different types of features, we propose a principle design that naturally leverages the feature mixing to capture the interaction between them.
We first design the align stage to project the feature with different shapes into the same shape.
Then we can concatenate the features and seamlessly apply feature mixing on them.
We extend TSMixer as illustrated in Fig.~\ref{fig:tsmixer_advanced}.
The architecture comprises two parts: align and mixing.
In the align stage, TSMixer aligns historical features ($\mathbb{R}^{L \times C_x}$) and future features ($\mathbb{R}^{T \times C_z}$) into the same shape ($\mathbb{R}^{L \times C_h}$) by applying temporal projection and a feature-mixing layer, where $C_h$ represents the size of hidden layers.
Additionally, it repeats the static features to transform their shape from $\mathbb{R}^{1 \times C_s}$ to $\mathbb{R}^{T \times C_s}$ in order to align the output length.

In the mixing stage, the mixing layer, which includes time-mixing and feature-mixing operations, naturally leverages temporal patterns and cross-variate information from all features collectively.
%To better utilize the information provided by static features, we propose a conditional mixing layer containing an additional feature-mixing layer for static features to offer richer semantics.
Lastly, we employ a fully-connected layer to generate outputs for each time step.
The outputs can either be real values of the forecasted time series ($\mathbb{R}^{T \times C_y}$), typically optimized by mean absolute error or mean square error, or in some tasks, they may generate parameters of a target distribution, such as negative binomial distribution for retail demand forecasting~\citep{SD20}.
We slightly modify mixing layers to better handle M5 dataset, as described in Appendix~\ref{appendix:imp_detail}.

\revision{
\subsection{Differences between TSMixer and MLP-Mixer}
While TSMixer shares architectural similarities with MLP-Mixer, the development of TSMixer, motivated by our analysis in Section~\ref{sec:linear}, has led to a unique normalization approach.
In TSMixer, two dimensions represent features and time steps, unlike MLP-Mixer's features and patches.
Consequently, we apply 2D normalization to maintain scale across features and time steps, since we have discovered the importance of utilizing temporal patterns in forecasting. 
Besides, we have proposed an extended version of TSMixer to better extract information from heterogeneous inputs, essential to achieve state-of-the-art results in real-world scenarios.
}

\begin{figure}[!tb]
%\vskip 0.2in
\begin{center}
\centerline{\includegraphics[width=\columnwidth]{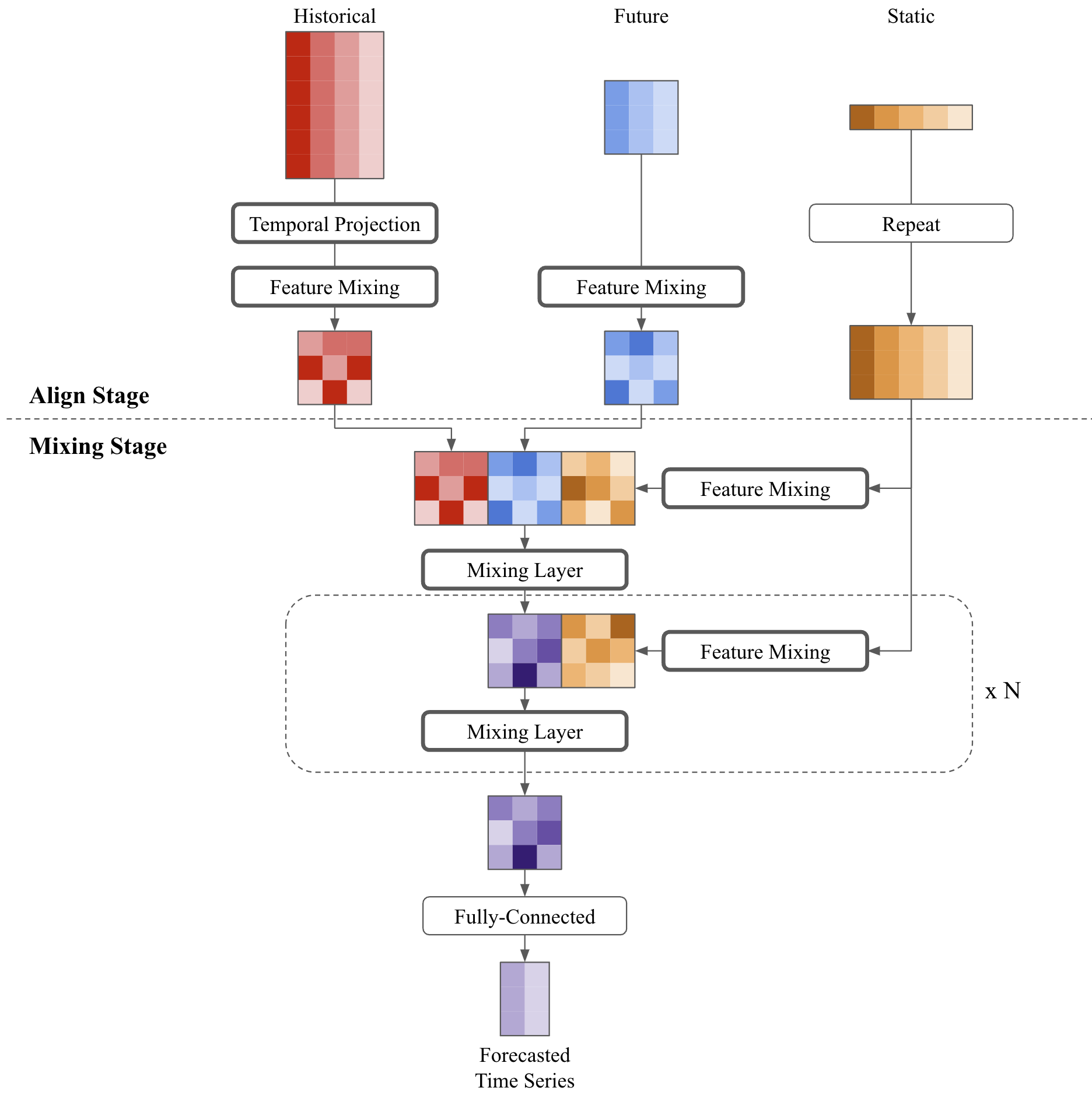}}
\caption{TSMixer with auxiliary information. The columns of the inputs are features and the rows are time steps. We first align the sequence lengths of different types of inputs to concatenate them. Then we apply mixing layers to model their temporal patterns and cross-variate information jointly.}
\label{fig:tsmixer_advanced}
\end{center}
\end{figure}

% \setlength{\textfloatsep}{8pt}
% \begin{algorithm}[!tb]
%     \caption{TSMixer for Forecasting with Auxiliary Information}
%     \label{alg:tsmixer_advanced}
% \begin{algorithmic}
%     \STATE {\bfseries Input:} historical data $\boldsymbol{X}$, future time-varying features $\boldsymbol{Z}$, static features $\boldsymbol{S}$, number of blocks $K$
%     \STATE {\bfseries Output:} negative binomial parameters $\boldsymbol{k}, \boldsymbol{p}$
%     \STATE $\boldsymbol{X} = \mathtt{Linear}(\boldsymbol{X}^\top)^\top$
%     \STATE $\boldsymbol{X}' = \mathtt{MLP}_x(\boldsymbol{X} \oplus \mathtt{MLP}_{sx}(\boldsymbol{S}))$
%     \STATE $\boldsymbol{Z}' = \mathtt{MLP}_z(\boldsymbol{Z} \oplus \mathtt{MLP}_{sz}(\boldsymbol{S}))$
%     \STATE $\boldsymbol{Y}' = \boldsymbol{X}' \oplus \boldsymbol{Z}'$
%     \FOR{$i=1$ {\bfseries to} $K$}
%     \STATE $\boldsymbol{Y}' = \mathtt{MLP}_\text{time}^{(i)}(\boldsymbol{Y}'^\top)^\top$
%     \STATE $\boldsymbol{Y}' = \mathtt{MLP}_\text{feat}^{(i)}(\boldsymbol{Y}' \oplus \mathtt{MLP}_{s}^{(i)}(\boldsymbol{S}))$
%     \ENDFOR
%     \STATE $\boldsymbol{k} = \mathtt{Linear}_k(\boldsymbol{Y}')$
%     \STATE $\boldsymbol{p} = \mathtt{Linear}_p(\boldsymbol{Y}')$
% \end{algorithmic}
% \end{algorithm}

\begin{table*}[!tb]
\centering
\small
\caption{Statistics of all datasets. Note that Electricity and Traffic can be considered as multivariate time series or multiple univariate time series since all variates share the same physical meaning in the dataset (e.g. electricity consumption at different locations).}
\label{table:data_stat}
\begin{tabular}{l|cc|ccc|c}
\hline
 & ETTh1/h2 & ETTm1/m2 & Weather & Electricity & Traffic & M5 \\ \hline
\# of time series ($M$) & 1 & 1 & 1 & 1 & 1 & 30,490 \\
\# of variants ($C$) & 7 & 7 & 21 & 321 & 862 & 1 \\
Time steps & 17,420 & 699,680 & 52,696 & 26,304 & 17,544 & 1,942 \\
Granularity & 1 hour & 15 minutes & 10 minutes & 1 hour & 1 hour & 1day \\
Historical feature ($C_x$) & 0 & 0 & 0 & 0 & 0 & 14 \\
Future feature ($C_z$) & 0 & 0 & 0 & 0 & 0 & 13 \\
Static feature ($C_s$) & 0 & 0 & 0 & 0 & 0 & 6 \\ \hline
Data partition & \multicolumn{2}{c|}{\multirow{2}{*}{12/4/4 (month)}} & \multicolumn{3}{c|}{\multirow{2}{*}{7:2:1}} & \multirow{2}{*}{1886/28/28 (day)} \\
(Train/Validation/Test) & \multicolumn{2}{c|}{} & \multicolumn{3}{c|}{} & \\

\hline
\end{tabular}
\end{table*}

\begin{table*}[!tb]
\centering
%\footnotesize
\setlength\tabcolsep{2pt}
\caption{Evaluation results on the long-term forecasting datasets. The numbers of models marked with ``*'' are obtained from~\citet{YN23}. The best numbers in each row are shown in {\color[HTML]{FE0000}\textbf{bold}} and the second best numbers are {\color[HTML]{3531FF}\underline{underlined}}. \revision{We skip TMix-Only in comparisons as it performs similar to TSMixer. The last row shows the average percentage of MSE improvement of TSMixer over other methods.}}
\label{table:ltsf_main}
\resizebox{\textwidth}{!}{%
\begin{tabular}{cc|cccccccccc|cccccc}
\hline
 &  & \multicolumn{10}{c|}{Multivariate Model} & \multicolumn{6}{c}{Univariate Model} \\ \hline
\multicolumn{2}{c|}{Models} & \multicolumn{2}{c|}{\textbf{TSMixer}} & \multicolumn{2}{c|}{TFT} & \multicolumn{2}{c|}{FEDformer*} & \multicolumn{2}{c|}{Autoformer*} & \multicolumn{2}{c|}{Informer*} & \multicolumn{2}{c|}{\textbf{TMix-Only}} & \multicolumn{2}{c|}{Linear} & \multicolumn{2}{c}{PatchTST*} \\ \hline
\multicolumn{2}{c|}{Metric} & MSE & \multicolumn{1}{c|}{MAE} & MSE & \multicolumn{1}{c|}{MAE} & MSE & \multicolumn{1}{c|}{MAE} & MSE & \multicolumn{1}{c|}{MAE} & MSE & MAE & MSE & \multicolumn{1}{c|}{MAE} & MSE & \multicolumn{1}{c|}{MAE} & MSE & MAE \\ \hline
\multicolumn{1}{c|}{ETTh1} & 96 & {\color[HTML]{FF0000} \textbf{0.361}} & \multicolumn{1}{c|}{{\color[HTML]{FF0000} \textbf{0.392}}} & 0.674 & \multicolumn{1}{c|}{0.634} & 0.376 & \multicolumn{1}{c|}{0.415} & 0.435 & \multicolumn{1}{c|}{0.446} & 0.941 & 0.769 & 0.359 & \multicolumn{1}{c|}{0.391} & {\color[HTML]{0000FF} {\ul 0.368}} & \multicolumn{1}{c|}{{\color[HTML]{FE0000} \textbf{0.392}}} & 0.370 & 0.400 \\
\multicolumn{1}{c|}{} & 192 & {\color[HTML]{FE0000} \textbf{0.404}} & \multicolumn{1}{c|}{{\color[HTML]{0000FF} {\ul 0.418}}} & 0.858 & \multicolumn{1}{c|}{0.704} & 0.423 & \multicolumn{1}{c|}{0.446} & 0.456 & \multicolumn{1}{c|}{0.457} & 1.007 & 0.786 & 0.402 & \multicolumn{1}{c|}{0.415} & {\color[HTML]{FF0000} \textbf{0.404}} & \multicolumn{1}{c|}{{\color[HTML]{FF0000} \textbf{0.415}}} & 0.413 & 0.429 \\
\multicolumn{1}{c|}{} & 336 & {\color[HTML]{FF0000} \textbf{0.420}} & \multicolumn{1}{c|}{{\color[HTML]{FF0000} \textbf{0.431}}} & 0.900 & \multicolumn{1}{c|}{0.731} & 0.444 & \multicolumn{1}{c|}{0.462} & 0.486 & \multicolumn{1}{c|}{0.487} & 1.038 & 0.784 & 0.420 & \multicolumn{1}{c|}{0.434} & 0.436 & \multicolumn{1}{c|}{{\color[HTML]{0000FF} {\ul 0.439}}} & {\color[HTML]{0000FF} {\ul 0.422}} & 0.440 \\
\multicolumn{1}{c|}{} & 720 & {\color[HTML]{0000FF} {\ul 0.463}} & \multicolumn{1}{c|}{{\color[HTML]{0000FF} {\ul 0.472}}} & 0.745 & \multicolumn{1}{c|}{0.666} & 0.469 & \multicolumn{1}{c|}{0.492} & 0.515 & \multicolumn{1}{c|}{0.517} & 1.144 & 0.857 & 0.453 & \multicolumn{1}{c|}{0.467} & 0.481 & \multicolumn{1}{c|}{0.495} & {\color[HTML]{FF0000} \textbf{0.447}} & {\color[HTML]{FF0000} \textbf{0.468}} \\ \hline
\multicolumn{1}{c|}{ETTh2} & 96 & {\color[HTML]{FE0000} \textbf{0.274}} & \multicolumn{1}{c|}{{\color[HTML]{0000FF} {\ul 0.341}}} & 0.409 & \multicolumn{1}{c|}{0.505} & 0.332 & \multicolumn{1}{c|}{0.374} & 0.332 & \multicolumn{1}{c|}{0.368} & 1.549 & 0.952 & 0.275 & \multicolumn{1}{c|}{0.342} & 0.297 & \multicolumn{1}{c|}{0.363} & {\color[HTML]{FF0000} \textbf{0.274}} & {\color[HTML]{FF0000} \textbf{0.337}} \\
\multicolumn{1}{c|}{} & 192 & {\color[HTML]{FF0000} \textbf{0.339}} & \multicolumn{1}{c|}{{\color[HTML]{0000FF} {\ul 0.385}}} & 0.953 & \multicolumn{1}{c|}{0.651} & 0.407 & \multicolumn{1}{c|}{0.446} & 0.426 & \multicolumn{1}{c|}{0.434} & 3.792 & 1.542 & 0.339 & \multicolumn{1}{c|}{0.386} & 0.398 & \multicolumn{1}{c|}{0.429} & {\color[HTML]{0000FF} {\ul 0.341}} & {\color[HTML]{FF0000} \textbf{0.382}} \\
\multicolumn{1}{c|}{} & 336 & {\color[HTML]{0000FF} {\ul 0.361}} & \multicolumn{1}{c|}{{\color[HTML]{0000FF} {\ul 0.406}}} & 1.006 & \multicolumn{1}{c|}{0.709} & 0.400 & \multicolumn{1}{c|}{0.447} & 0.477 & \multicolumn{1}{c|}{0.479} & 4.215 & 1.642 & 0.366 & \multicolumn{1}{c|}{0.413} & 0.500 & \multicolumn{1}{c|}{0.491} & {\color[HTML]{FF0000} \textbf{0.329}} & {\color[HTML]{FF0000} \textbf{0.384}} \\
\multicolumn{1}{c|}{} & 720 & 0.445 & \multicolumn{1}{c|}{0.470} & 1.187 & \multicolumn{1}{c|}{0.816} & {\color[HTML]{0000FF} {\ul 0.412}} & \multicolumn{1}{c|}{{\color[HTML]{0000FF} {\ul 0.469}}} & 0.453 & \multicolumn{1}{c|}{0.490} & 3.656 & 1.619 & 0.437 & \multicolumn{1}{c|}{0.465} & 0.795 & \multicolumn{1}{c|}{0.633} & {\color[HTML]{FF0000} \textbf{0.379}} & {\color[HTML]{FF0000} \textbf{0.422}} \\ \hline
\multicolumn{1}{c|}{ETTm1} & 96 & {\color[HTML]{FF0000} \textbf{0.285}} & \multicolumn{1}{c|}{{\color[HTML]{FF0000} \textbf{0.339}}} & 0.752 & \multicolumn{1}{c|}{0.626} & 0.326 & \multicolumn{1}{c|}{0.390} & 0.510 & \multicolumn{1}{c|}{0.492} & 0.626 & 0.560 & 0.284 & \multicolumn{1}{c|}{0.338} & 0.303 & \multicolumn{1}{c|}{0.346} & {\color[HTML]{0000FF} {\ul 0.293}} & {\color[HTML]{0000FF} {\ul 0.346}} \\
\multicolumn{1}{c|}{} & 192 & {\color[HTML]{FF0000} \textbf{0.327}} & \multicolumn{1}{c|}{{\color[HTML]{FF0000} \textbf{0.365}}} & 0.752 & \multicolumn{1}{c|}{0.649} & 0.365 & \multicolumn{1}{c|}{0.415} & 0.514 & \multicolumn{1}{c|}{0.495} & 0.725 & 0.619 & 0.324 & \multicolumn{1}{c|}{0.362} & 0.335 & \multicolumn{1}{c|}{{\color[HTML]{FE0000} \textbf{0.365}}} & {\color[HTML]{0000FF} {\ul 0.333}} & 0.370 \\
\multicolumn{1}{c|}{} & 336 & {\color[HTML]{FF0000} \textbf{0.356}} & \multicolumn{1}{c|}{{\color[HTML]{FF0000} \textbf{0.382}}} & 0.810 & \multicolumn{1}{c|}{0.674} & 0.392 & \multicolumn{1}{c|}{0.425} & 0.510 & \multicolumn{1}{c|}{0.492} & 1.005 & 0.741 & 0.359 & \multicolumn{1}{c|}{0.384} & {\color[HTML]{0000FF} {\ul 0.365}} & \multicolumn{1}{c|}{{\color[HTML]{0000FF} {\ul 0.384}}} & 0.369 & 0.392 \\
\multicolumn{1}{c|}{} & 720 & {\color[HTML]{0000FF} {\ul 0.419}} & \multicolumn{1}{c|}{{\color[HTML]{FF0000} \textbf{0.414}}} & 0.849 & \multicolumn{1}{c|}{0.695} & 0.446 & \multicolumn{1}{c|}{0.458} & 0.527 & \multicolumn{1}{c|}{0.493} & 1.133 & 0.845 & 0.419 & \multicolumn{1}{c|}{0.414} & {\color[HTML]{0000FF} {\ul 0.419}} & \multicolumn{1}{c|}{{\color[HTML]{0000FF} {\ul 0.415}}} & {\color[HTML]{FF0000} \textbf{0.416}} & 0.420 \\ \hline
\multicolumn{1}{c|}{ETTm2} & 96 & {\color[HTML]{FF0000} \textbf{0.163}} & \multicolumn{1}{c|}{{\color[HTML]{FF0000} \textbf{0.252}}} & 0.386 & \multicolumn{1}{c|}{0.472} & 0.180 & \multicolumn{1}{c|}{0.271} & 0.205 & \multicolumn{1}{c|}{0.293} & 0.355 & 0.462 & 0.162 & \multicolumn{1}{c|}{0.249} & 0.170 & \multicolumn{1}{c|}{0.266} & {\color[HTML]{0000FF} {\ul 0.166}} & {\color[HTML]{0000FF} {\ul 0.256}} \\
\multicolumn{1}{c|}{} & 192 & {\color[HTML]{FF0000} \textbf{0.216}} & \multicolumn{1}{c|}{{\color[HTML]{FF0000} \textbf{0.290}}} & 0.739 & \multicolumn{1}{c|}{0.626} & 0.252 & \multicolumn{1}{c|}{0.318} & 0.278 & \multicolumn{1}{c|}{0.336} & 0.595 & 0.586 & 0.220 & \multicolumn{1}{c|}{0.293} & 0.236 & \multicolumn{1}{c|}{0.317} & {\color[HTML]{0000FF} {\ul 0.223}} & {\color[HTML]{0000FF} {\ul 0.296}} \\
\multicolumn{1}{c|}{} & 336 & {\color[HTML]{FF0000} \textbf{0.268}} & \multicolumn{1}{c|}{{\color[HTML]{FF0000} \textbf{0.324}}} & 0.477 & \multicolumn{1}{c|}{0.494} & 0.324 & \multicolumn{1}{c|}{0.364} & 0.343 & \multicolumn{1}{c|}{0.379} & 1.270 & 0.871 & 0.269 & \multicolumn{1}{c|}{0.326} & 0.308 & \multicolumn{1}{c|}{0.369} & {\color[HTML]{0000FF} {\ul 0.274}} & {\color[HTML]{0000FF} {\ul 0.329}} \\
\multicolumn{1}{c|}{} & 720 & 0.420 & \multicolumn{1}{c|}{0.422} & 0.523 & \multicolumn{1}{c|}{0.537} & {\color[HTML]{0000FF} {\ul 0.410}} & \multicolumn{1}{c|}{0.420} & 0.414 & \multicolumn{1}{c|}{{\color[HTML]{0000FF} {\ul 0.419}}} & 3.001 & 1.267 & 0.358 & \multicolumn{1}{c|}{0.382} & 0.435 & \multicolumn{1}{c|}{0.449} & {\color[HTML]{FF0000} \textbf{0.362}} & {\color[HTML]{FF0000} \textbf{0.385}} \\ \hline
\multicolumn{1}{c|}{Weather} & 96 & {\color[HTML]{FF0000} \textbf{0.145}} & \multicolumn{1}{c|}{{\color[HTML]{FF0000} \textbf{0.198}}} & 0.441 & \multicolumn{1}{c|}{0.474} & 0.238 & \multicolumn{1}{c|}{0.314} & 0.249 & \multicolumn{1}{c|}{0.329} & 0.354 & 0.405 & 0.145 & \multicolumn{1}{c|}{0.196} & 0.170 & \multicolumn{1}{c|}{0.229} & {\color[HTML]{0000FF} {\ul 0.149}} & {\color[HTML]{FE0000} \textbf{0.198}} \\
\multicolumn{1}{c|}{} & 192 & {\color[HTML]{FF0000} \textbf{0.191}} & \multicolumn{1}{c|}{{\color[HTML]{0000FF} {\ul 0.242}}} & 0.699 & \multicolumn{1}{c|}{0.599} & 0.275 & \multicolumn{1}{c|}{0.329} & 0.325 & \multicolumn{1}{c|}{0.370} & 0.419 & 0.434 & 0.190 & \multicolumn{1}{c|}{0.240} & 0.213 & \multicolumn{1}{c|}{0.268} & {\color[HTML]{0000FF} {\ul 0.194}} & {\color[HTML]{FF0000} \textbf{0.241}} \\
\multicolumn{1}{c|}{} & 336 & {\color[HTML]{FF0000} \textbf{0.242}} & \multicolumn{1}{c|}{{\color[HTML]{FF0000} \textbf{0.280}}} & 0.693 & \multicolumn{1}{c|}{0.596} & 0.339 & \multicolumn{1}{c|}{0.377} & 0.351 & \multicolumn{1}{c|}{0.391} & 0.583 & 0.543 & 0.240 & \multicolumn{1}{c|}{0.279} & 0.257 & \multicolumn{1}{c|}{0.305} & {\color[HTML]{0000FF} {\ul 0.245}} & {\color[HTML]{0000FF} {\ul 0.282}} \\
\multicolumn{1}{c|}{} & 720 & 0.320 & \multicolumn{1}{c|}{{\color[HTML]{0000FF} {\ul 0.336}}} & 1.038 & \multicolumn{1}{c|}{0.753} & 0.389 & \multicolumn{1}{c|}{0.409} & 0.415 & \multicolumn{1}{c|}{0.426} & 0.916 & 0.705 & 0.325 & \multicolumn{1}{c|}{0.339} & {\color[HTML]{0000FF} {\ul 0.318}} & \multicolumn{1}{c|}{0.356} & {\color[HTML]{FF0000} \textbf{0.314}} & {\color[HTML]{FF0000} \textbf{0.334}} \\ \hline
\multicolumn{1}{c|}{Electricity} & 96 & {\color[HTML]{0000FF} {\ul 0.131}} & \multicolumn{1}{c|}{{\color[HTML]{0000FF} {\ul 0.229}}} & 0.295 & \multicolumn{1}{c|}{0.376} & 0.186 & \multicolumn{1}{c|}{0.302} & 0.196 & \multicolumn{1}{c|}{0.313} & 0.304 & 0.393 & 0.132 & \multicolumn{1}{c|}{0.225} & 0.135 & \multicolumn{1}{c|}{0.232} & {\color[HTML]{FF0000} \textbf{0.129}} & {\color[HTML]{FF0000} \textbf{0.222}} \\
\multicolumn{1}{c|}{} & 192 & 0.151 & \multicolumn{1}{c|}{{\color[HTML]{0000FF} {\ul 0.246}}} & 0.327 & \multicolumn{1}{c|}{0.397} & 0.197 & \multicolumn{1}{c|}{0.311} & 0.211 & \multicolumn{1}{c|}{0.324} & 0.327 & 0.417 & 0.152 & \multicolumn{1}{c|}{0.243} & {\color[HTML]{0000FF} {\ul 0.149}} & \multicolumn{1}{c|}{{\color[HTML]{0000FF} {\ul 0.246}}} & {\color[HTML]{FF0000} \textbf{0.147}} & {\color[HTML]{FF0000} \textbf{0.240}} \\
\multicolumn{1}{c|}{} & 336 & {\color[HTML]{FF0000} \textbf{0.161}} & \multicolumn{1}{c|}{{\color[HTML]{0000FF} {\ul 0.261}}} & 0.298 & \multicolumn{1}{c|}{0.380} & 0.213 & \multicolumn{1}{c|}{0.328} & 0.214 & \multicolumn{1}{c|}{0.327} & 0.333 & 0.422 & 0.166 & \multicolumn{1}{c|}{0.260} & 0.164 & \multicolumn{1}{c|}{0.263} & {\color[HTML]{0000FF} {\ul 0.163}} & {\color[HTML]{FF0000} \textbf{0.259}} \\
\multicolumn{1}{c|}{} & 720 & {\color[HTML]{FE0000} {\ul \textbf{0.197}}} & \multicolumn{1}{c|}{{\color[HTML]{0000FF} {\ul 0.293}}} & 0.338 & \multicolumn{1}{c|}{0.412} & 0.233 & \multicolumn{1}{c|}{0.344} & 0.236 & \multicolumn{1}{c|}{0.342} & 0.351 & 0.427 & 0.200 & \multicolumn{1}{c|}{0.291} & 0.199 & \multicolumn{1}{c|}{0.297} & {\color[HTML]{FF0000} \textbf{0.197}} & {\color[HTML]{FF0000} \textbf{0.290}} \\ \hline
\multicolumn{1}{c|}{Traffic} & 96 & {\color[HTML]{0000FF} {\ul 0.376}} & \multicolumn{1}{c|}{{\color[HTML]{0000FF} {\ul 0.264}}} & 0.678 & \multicolumn{1}{c|}{0.362} & 0.576 & \multicolumn{1}{c|}{0.359} & 0.597 & \multicolumn{1}{c|}{0.371} & 0.733 & 0.410 & 0.370 & \multicolumn{1}{c|}{0.258} & 0.395 & \multicolumn{1}{c|}{0.274} & {\color[HTML]{FF0000} \textbf{0.360}} & {\color[HTML]{FF0000} \textbf{0.249}} \\
\multicolumn{1}{c|}{} & 192 & {\color[HTML]{0000FF} {\ul 0.397}} & \multicolumn{1}{c|}{{\color[HTML]{0000FF} {\ul 0.277}}} & 0.664 & \multicolumn{1}{c|}{0.355} & 0.610 & \multicolumn{1}{c|}{0.380} & 0.607 & \multicolumn{1}{c|}{0.382} & 0.777 & 0.435 & 0.390 & \multicolumn{1}{c|}{0.268} & 0.406 & \multicolumn{1}{c|}{0.279} & {\color[HTML]{FF0000} \textbf{0.379}} & {\color[HTML]{FF0000} \textbf{0.256}} \\
\multicolumn{1}{c|}{} & 336 & {\color[HTML]{0000FF} {\ul 0.413}} & \multicolumn{1}{c|}{0.290} & 0.679 & \multicolumn{1}{c|}{0.354} & 0.608 & \multicolumn{1}{c|}{0.375} & 0.623 & \multicolumn{1}{c|}{0.387} & 0.776 & 0.434 & 0.404 & \multicolumn{1}{c|}{0.276} & 0.416 & \multicolumn{1}{c|}{{\color[HTML]{0000FF} {\ul 0.286}}} & {\color[HTML]{FF0000} \textbf{0.392}} & {\color[HTML]{FF0000} \textbf{0.264}} \\
\multicolumn{1}{c|}{} & 720 & {\color[HTML]{0000FF} {\ul 0.444}} & \multicolumn{1}{c|}{{\color[HTML]{0000FF} {\ul 0.306}}} & 0.610 & \multicolumn{1}{c|}{0.326} & 0.621 & \multicolumn{1}{c|}{0.375} & 0.639 & \multicolumn{1}{c|}{0.395} & 0.827 & 0.466 & 0.443 & \multicolumn{1}{c|}{0.297} & 0.454 & \multicolumn{1}{c|}{0.308} & {\color[HTML]{FF0000} \textbf{0.432}} & {\color[HTML]{FF0000} \textbf{0.286}} \\ \hline
\multicolumn{4}{c|}{\revision{\textbf{TSMixer MSE Imp.}}} & \multicolumn{2}{c|}{\textbf{51.94\%}} & \multicolumn{2}{c|}{\textbf{16.69\%}} & \multicolumn{2}{c|}{\textbf{24.51\%} } & \multicolumn{2}{c|}{\textbf{62.40\%} } & \multicolumn{2}{c|}{-0.66\%} & \multicolumn{2}{c|}{\textbf{6.77\%}} & \multicolumn{2}{c}{-1.53\%}\\ \hline
\end{tabular}%
}
\end{table*}

% \begin{table}[!tbp]
% \centering
% \setlength\tabcolsep{2.5pt}
% \caption{Evaluation results on M5. We report the mean and standard deviation of WRMSSE across 5 different random seeds.}
% \label{table:m5_main}
% \begin{tabular}{@{}l|c|cc|c@{}}
% \toprule
% \multirow{2}{*}{Models} & \multirow{2}{*}{\begin{tabular}[c]{@{}c@{}}Multivariate\\ input\end{tabular}} & \multicolumn{2}{c|}{Auxiliary feature} & \multirow{2}{*}{\begin{tabular}[c]{@{}c@{}}Test\\ WRMSSE\end{tabular}} \\ \cmidrule(lr){3-4}
%  &  & Static & Future &  \\ \midrule
% PatchTST &  &  &  & 0.976$\pm$0.014 \\
% FEDformer & \ding{52} &  &  & 0.804$\pm$0.039 \\
% \textbf{TSMixer} & \ding{52} & \multicolumn{1}{l}{} & \multicolumn{1}{l|}{} & \textbf{0.737$\pm$0.033} \\ \midrule
% DeepAR & \ding{52} & \ding{52} & \ding{52} & 0.789$\pm$0.025 \\
% TFT & \ding{52} & \ding{52} & \ding{52} & 0.670$\pm$0.020 \\
% \textbf{TSMixer} & \ding{52} & \ding{52} & \ding{52} & \textbf{0.640$\pm$0.013} \\ \bottomrule
% \end{tabular}
% \end{table}

\section{Experiments}
\label{sec:exp}
%\subsection{Setup}

%\subsubsection{Datasets}
%We evaluate the performance of the proposed models on a range of time series forecasting tasks. 
%Specifically, 
We evaluate TSMixer on seven popular multivariate long-term forecasting benchmarks and a large-scale real-world retail dataset, M5~\citep{SM22}.
The long-term forecasting datasets cover various applications such as weather, electricity, and traffic, and are comprised of multivariate time series without auxiliary information.
The M5 dataset is for the competition task of predicting the sales of various items at Walmart.
It is a large scale dataset containing 30,490 time series with static features such as store locations, as well as time-varying features such as campaign information.
This complexity renders M5 a more challenging benchmark to explore the potential benefits of cross-variate information and auxiliary features.
The statistics of these datasets are presented in Table~\ref{table:data_stat}.

%\subsubsection{Training Methodology}

For multivariate long-term forecasting datasets, we follow the settings in recent research~\citep{LY22,TZ22b,YN23}.
We set the input length $L = 512$ as suggested in~\citet{YN23} and evaluate the results for prediction lengths of $T = \{96, 192, 336, 720\}$.
We use the Adam optimization algorithm~\citep{DK15} to minimize the mean square error (MSE) training objective, and consider MSE and mean absolute error (MAE) as the evaluation metrics.
We apply reversible instance normalization~\citep{TK22} to ensure a fair comparison with the state-of-the-art PatchTST~\citep{YN23}.

For the M5 dataset, 
we mostly follow the data processing from~\citet{gluonts_jmlr}.
We consider the prediction length of $T = 28$ (same as the competition), and set the input length to $L = 35$.
We optimize log-likelihood of negative binomial distribution as suggested by~\citet{SD20}.
%The model's output at each time step is projected to the parameters of a negative binomial distribution as suggested in~\citet{SD20} to model the sales distribution.
%We use the Adam optimization algorithm to minimize the negative log-likelihood for the output distribution at each time step.
We follow the competition's protocol~\citep{SM22} to aggregate the predictions at different levels and evaluate them using the weighted root mean squared scaled error (WRMSSE).
More details about the experimental setup and hyperparameter tuning can be found in Appendices~\ref{appendix:exp_detail} and ~\ref{appendix:best_hp}.

\subsection{Multivariate Long-term Forecasting}
For multivariate long-term forecasting tasks, we compare TSMixer to state-of-the-art multivariate models such as FEDformer~\citep{TZ22}, Autoformer~\citep{HW21}, Informer~\citep{HZ21}, and univariate models like PatchTST~\citep{YN23} and LTSF-Linear~\citep{AZ22}.
Additionally, we include TFT~\citep{LB21}, a deep learning-based model that considers auxiliary information, as a baseline to understand the limitations of solely relying on historical features.
We also evaluate TMix-Only, a variant of TSMixer that only applies time-mixing, to assess the effectiveness of feature-mixing. The results are presented in Table~\ref{table:ltsf_main}.
A comparison with other MLP-like alternatives is provided in Appendix~\ref{appendix:alter}.

\paragraph{TMix-Only}
We first examine the results of univariate models.
Compared to the linear model, TMix-Only shows that stacking proves beneficial, even without considering cross-variate information.
Moreover, TMix-Only performs at a level comparable to the state-of-the-art PatchTST, suggesting that the simple time-mixing layer is on par with more complex attention mechanisms.

\paragraph{TSMixer}
Our results indicate that TSMixer exhibits similar performance to TMix-Only and PatchTST.
It significantly outperforms state-of-the-art multivariate models and achieves competitive performance compared to PatchTST, the state-of-the-art univariate model.
TSMixer is the only multivariate model that is competitive with univariate models with all other multivariate models performing significantly worse than univariate models.
The performance of TSMixer is also similar to that of TMix-Only, which implies that feature-mixing is not beneficial for these benchmarks.
These observations are consistent with findings in~\citep{AZ22} and~\citep{YN23}. The results suggest that cross-variate information may be less significant in these datasets, indicating that the commonly used datasets may not be sufficient to evaluate a model's capability of utilizing covariates.
However, we will demonstrate that cross-variate information can be useful in other scenarios.

\paragraph{Effects of lookback window length}
To gain a deeper understanding of TSMixer's capacity to leverage longer sequences, we conduct experiments with varying lookback window sizes, specifically $L = \{96, 336, 512, 720\}$.
We also perform similar experiments on linear models to support our findings presented in Section~\ref{sec:linear}.
The results of these experiments are depicted in Fig.~\ref{fig:seq_len_sub}. More results and details can be found in Appendix~\ref{appendix:seq_len}.
Our empirical analyses reveal that the performance of linear models improves significantly as the lookback window size increases from 96 to 336, and appears to be reaching a convergence point at 720.
This aligns with our prior findings that the performance of linear models is dependent on the lookback window size.
On the other hand, TSMixer achieves the best performance when the window size is set to 336 or 512, and maintains the similar level of performance as the window size is increased to 720.
As noted by \citet{YN23}, many multivariate Transformer-based models (such as Transformer, Informer, Autoformer, and FEDformer) do not benefit from lookback window sizes greater than 192, and are prone to overfitting when the window size is increased.
In comparison, TSMixer demonstrates a superior ability to leverage longer sequences and better generalization capabilities than other multivariate models.

\begin{figure}[tb]
\begin{center}
\centerline{\includegraphics[width=\columnwidth]{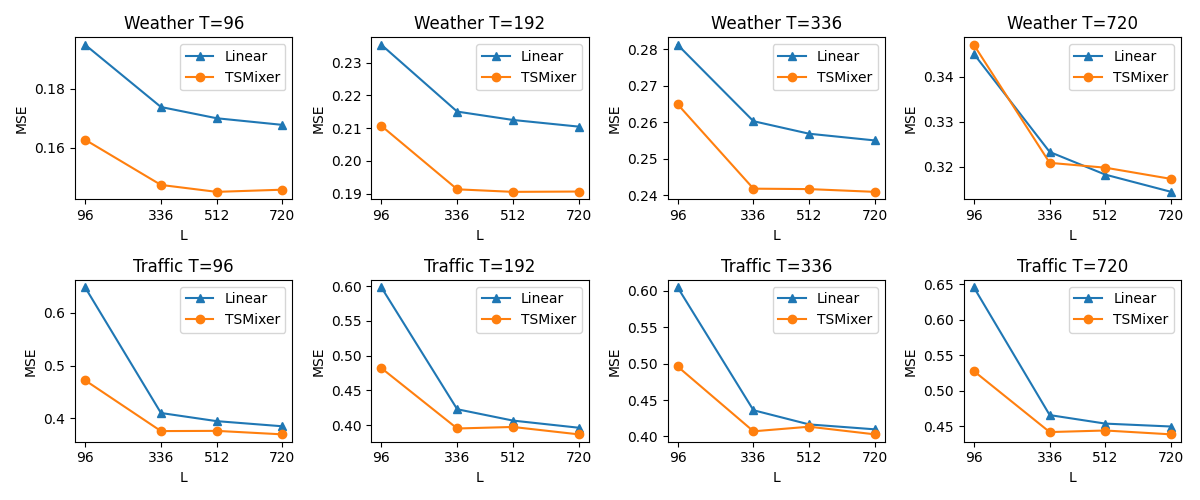}}
\caption{Performance comparison on varying lookback window size $L$ of linear models and TSMixer.}
\label{fig:seq_len_sub}
\end{center}
\end{figure}

\begin{table}[tb]
\centering
\caption{Evaluation on M5 without auxiliary information. We report the mean and standard deviation of WRMSSE across 5 different random seeds. TMix-Only is a univariate variant of TSMixer where only time-mixing is applied. The multivariate models outperforms univariate models with a significant gap.}
\label{table:m5_past}
\begin{tabular}{lccc}
\hline
Models & Multivariate & Test WRMSSE & Val WRMSSE \\ \hline
Linear &  & 0.983$\pm$0.016 & 1.045$\pm$0.018 \\
PatchTST &  & 0.976$\pm$0.014 & 0.992$\pm$0.011 \\
\textbf{TMix-Only} &  & 0.960$\pm$0.041 & 1.000$\pm$0.027 \\ \hline
Autoformer &	\ding{52} &	0.742$\pm$0.029 &	0.640$\pm$0.023 \\
FEDformer & \ding{52} & 0.804$\pm$0.039 & 0.674$\pm$0.014 \\
\textbf{TSMixer} & \ding{52} & \textbf{0.737$\pm$0.033} & 0.605$\pm$0.027 \\ \hline
\end{tabular}
\end{table}

\begin{table}[tb]
\centering
\caption{Evaluation on M5 with auxiliary information.}
\label{table:m5_aux}
\begin{tabular}{l|cc|cc}
\hline
\multirow{2}{*}{Models} & \multicolumn{2}{c|}{Auxiliary feature} & \multirow{2}{*}{Test WRMSSE} & \multirow{2}{*}{Val WRMSSE} \\ \cline{2-3}
 & Static & Future &  &  \\ \hline
DeepAR & \ding{52} & \ding{52} & 0.789$\pm$0.025 & 0.611$\pm$0.007 \\
TFT & \ding{52} & \ding{52} & 0.670$\pm$0.020 & 0.579$\pm$0.011 \\ \hline
\multirow{4}{*}{TSMixer-Ext} &  &  & 0.737$\pm$0.033 & 0.000$\pm$0.000 \\
 & \ding{52} &  & 0.657$\pm$0.046 & 0.000$\pm$0.000 \\
 &  & \ding{52} & 0.697$\pm$0.028 & 0.000$\pm$0.000 \\
 & \ding{52} & \ding{52} & \textbf{0.640}$\pm$0.013 & 0.568$\pm$0.009 \\ \hline
\end{tabular}
\end{table}

\subsection{Large-scale Demand Forecasting}
We evaluate TSMixer on the large-scale retail dataset M5 to explore the model's ability to leverage complicated cross-variate information and auxiliary features.
M5 comprises thousands of multivariate time series, each with its own historical observations, future time-varying features, and static features, in contrast to the long-term forecasting benchmarks, which typically consist of a single multivariate historical time series.
We utilize TSMixer-Ext, the architecture introduced in Sec.\ref{subsec:tsmixer_aux}, to leverage the auxiliary information.
Furthermore, the presence of a high proportion of zeros in the target sequence presents an additional challenge for prediction.
Therefore, we learn negative binomial distributions, as suggested by\citet{SD20}, to better fit the distribution.

\paragraph{Forecast with Historical Features Only}
First, we compare TSMixer with other baselines using historical features only.
As shown in Table~\ref{table:m5_past} the multivariate models perform much better than univariate models for this dataset.
Notably, PatchTST, which is designed to ignore cross-variate information, performs significantly worse than multivariate TSMixer and FEDformer.
This result underscores the importance of modeling cross-variate information on some forecasting tasks, as opposed to the argument in~\citep{YN23}.
Furthermore, TSMixer substantially outperforms FEDformer, a state-of-the-art multivariate model.

TSMixer exhibits a unique value as it is the only model that performs as well as univariate models when cross-variate information is not useful, and it is the best model to leverage cross-variate information when it is useful.

\paragraph{Forecast with Auxiliary Information}
To understand the extent to which TSMixer can leverage auxiliary information, we compare TSMixer against established time series forecasting algorithms, TFT~\citep{LB21} and DeepAR~\citep{SD20}. Table~\ref{table:m5_aux} shows that with auxiliary features TSMixer outperforms all other baselines by a significant margin.
This result demonstrates the superior capability of TSMixer for modeling complex cross-variate information and effectively leveraging auxiliary features, an impactful capability for real-world time-series data beyond long-term forecasting benchmarks.
We also conduct ablation studies by removing the static features and future time-varying features.
The results demonstrates that while the impact of static features is more prominent, both static and future time-varying features contribute to the overall performance of TSMixer.
This further emphasizes the importance of incorporating auxiliary features in time series forecasting models.

\paragraph{Computational Cost}
We measure the computational cost of each models with their best hyperparameters on M5.
As shown in Table~\ref{table:cost}, TSMixer has much smaller size compared to RNN- and Transformer-based models.
TSMixer has similar training time with multivariate models, however, it achieves much faster inference, which is almost the same as simple linear models.
Note that PatchTST has faster inference speed because it merges the feature dimension into the batch dimension, which leads to more parallelism but loses the multivariate information, a key aspect for high forecasting accuracy on real-world time-series data. 

\begin{table}[tbp]
\caption{Computational cost on M5. All models are trained on a single NVIDIA Tesla V100 GPU. All models are implemented in PyTorch, except TFT, which is implemented in MXNet.}
\centering
\small
\label{table:cost}
\begin{tabular}{lcllrr}
\hline
Models & \multicolumn{1}{l}{Multivariate} & Auxiliary feature & \# of params & \multicolumn{1}{l}{training time (s)} & \multicolumn{1}{l}{inference (step/s)} \\ \hline
Linear & \multicolumn{1}{l}{} &  & 1K & 2958.18 & 110 \\
PatchTST & \multicolumn{1}{l}{} &  & 26.7K & 886.101 & 120 \\
\textbf{TMix-Only} & \multicolumn{1}{l}{} &  & 6.3K & 4073.72 & 110 \\ \hline
Autoformer	 & \ding{52}	& &	471K	& 119087.64	& 42 \\
FEDformer & \ding{52} &  & 1.7M & 11084.43 & 56 \\
\textbf{TSMixer} & \ding{52} &  & 189K & 11077.95 & 96 \\ \hline
DeepAR & \ding{52} & \multicolumn{1}{c}{\ding{52}} & 1M & 8743.55 & 105 \\
TFT & \ding{52} & \multicolumn{1}{c}{\ding{52}} & 2.9M & 14426.79 & 22 \\
\textbf{TSMixer-Ext} & \ding{52} & \multicolumn{1}{c}{\ding{52}} & 244K & 11615.87 & 108 \\ \hline
\end{tabular}
\end{table}

\section{Conclusions}
We propose TSMixer, a novel architecture for time series forecasting that is designed using MLPs instead of commonly used RNNs and attention mechanisms to obtain superior generalization with a simple architecture.
Our results at a wide range of real-world time series forecasting tasks demonstrate that TSMixer is highly effective in both long-term forecasting benchmarks for multivariate time-series, and real-world large-scale retail demand forecasting tasks.
Notably, TSMixer is the only multivariate model that is able to achieve similar performance to univariate models in long term time series forecasting benchmarks.
The TSMixer architecture has significant potential for further improvement and we believe it will be useful in a wide range of time series forecasting tasks.
Some of the potential future works include further exploring the interpretability of TSMixer, as well as its scalability to even larger datasets.
We hope this work will pave the way for more innovative architectures for time series forecasting.

\bibliography{main}

\begin{thebibliography}{39}
\providecommand{\natexlab}[1]{#1}
\providecommand{\url}[1]{\texttt{#1}}
\expandafter\ifx\csname urlstyle\endcsname\relax
  \providecommand{\doi}[1]{doi: #1}\else
  \providecommand{\doi}{doi: \begingroup \urlstyle{rm}\Url}\fi

\bibitem[Alaa \& van~der Schaar(2019)Alaa and van~der Schaar]{AA19}
Ahmed~M Alaa and Mihaela van~der Schaar.
\newblock Attentive state-space modeling of disease progression.
\newblock \emph{Advances in neural information processing systems}, 32, 2019.

\bibitem[Alexandrov et~al.(2020)Alexandrov, Benidis, Bohlke-Schneider,
  Flunkert, Gasthaus, Januschowski, Maddix, Rangapuram, Salinas, Schulz,
  Stella, Türkmen, and Wang]{gluonts_jmlr}
Alexander Alexandrov, Konstantinos Benidis, Michael Bohlke-Schneider, Valentin
  Flunkert, Jan Gasthaus, Tim Januschowski, Danielle~C. Maddix, Syama
  Rangapuram, David Salinas, Jasper Schulz, Lorenzo Stella, Ali~Caner Türkmen,
  and Yuyang Wang.
\newblock {GluonTS: Probabilistic and Neural Time Series Modeling in Python}.
\newblock \emph{Journal of Machine Learning Research}, 21\penalty0
  (116):\penalty0 1--6, 2020.
\newblock URL \url{http://jmlr.org/papers/v21/19-820.html}.

\bibitem[B{\"o}se et~al.(2017)B{\"o}se, Flunkert, Gasthaus, Januschowski,
  Lange, Salinas, Schelter, Seeger, and Wang]{BJ17}
Joos-Hendrik B{\"o}se, Valentin Flunkert, Jan Gasthaus, Tim Januschowski,
  Dustin Lange, David Salinas, Sebastian Schelter, Matthias Seeger, and Yuyang
  Wang.
\newblock Probabilistic demand forecasting at scale.
\newblock \emph{Proceedings of the VLDB Endowment}, 10\penalty0 (12):\penalty0
  1694--1705, 2017.

\bibitem[Box et~al.(1970)Box, Jenkins, Reinsel, and Ljung]{BG70}
George~EP Box, Gwilym~M Jenkins, Gregory~C Reinsel, and Greta~M Ljung.
\newblock \emph{Time series analysis: forecasting and control}.
\newblock John Wiley \& Sons, 1970.

\bibitem[Capistr{\'a}n et~al.(2010)Capistr{\'a}n, Constandse, and
  Ramos-Francia]{CC10}
Carlos Capistr{\'a}n, Christian Constandse, and Manuel Ramos-Francia.
\newblock Multi-horizon inflation forecasts using disaggregated data.
\newblock \emph{Economic Modelling}, 27\penalty0 (3):\penalty0 666--677, 2010.

\bibitem[Choe et~al.(2022)Choe, Park, Rameau, Park, and Kweon]{CJ22}
Jaesung Choe, Chunghyun Park, Francois Rameau, Jaesik Park, and In~So Kweon.
\newblock Pointmixer: Mlp-mixer for point cloud understanding.
\newblock In \emph{Computer Vision--ECCV 2022: 17th European Conference, Tel
  Aviv, Israel, October 23--27, 2022, Proceedings, Part XXVII}, pp.\  620--640.
  Springer, 2022.

\bibitem[Courty \& Li(1999)Courty and Li]{CP99}
Pascal Courty and Hao Li.
\newblock Timing of seasonal sales.
\newblock \emph{The Journal of Business}, 72\penalty0 (4):\penalty0 545--572,
  1999.

\bibitem[Fusco et~al.(2022)Fusco, Pascual, and Staar]{FF22}
Francesco Fusco, Damian Pascual, and Peter Staar.
\newblock pnlp-mixer: an efficient all-mlp architecture for language.
\newblock \emph{arXiv preprint arXiv:2202.04350}, 2022.

\bibitem[Gamboa(2017)]{GJ17}
John Cristian~Borges Gamboa.
\newblock Deep learning for time-series analysis.
\newblock \emph{arXiv preprint arXiv:1701.01887}, 2017.

\bibitem[Gu et~al.(2022)Gu, Goel, and R{\'{e}}]{AG22}
Albert Gu, Karan Goel, and Christopher R{\'{e}}.
\newblock Efficiently modeling long sequences with structured state spaces.
\newblock In \emph{The Tenth International Conference on Learning
  Representations, {ICLR} 2022, Virtual Event, April 25-29, 2022}.
  OpenReview.net, 2022.

\bibitem[Holt(2004)]{HC04}
Charles~C Holt.
\newblock Forecasting seasonals and trends by exponentially weighted moving
  averages.
\newblock \emph{International journal of forecasting}, 20\penalty0
  (1):\penalty0 5--10, 2004.

\bibitem[Kim et~al.(2022)Kim, Kim, Tae, Park, Choi, and Choo]{TK22}
Taesung Kim, Jinhee Kim, Yunwon Tae, Cheonbok Park, Jang{-}Ho Choi, and Jaegul
  Choo.
\newblock Reversible instance normalization for accurate time-series
  forecasting against distribution shift.
\newblock In \emph{The Tenth International Conference on Learning
  Representations, {ICLR} 2022, Virtual Event, April 25-29, 2022}.
  OpenReview.net, 2022.

\bibitem[Kingma \& Ba(2015)Kingma and Ba]{DK15}
Diederik~P. Kingma and Jimmy Ba.
\newblock Adam: {A} method for stochastic optimization.
\newblock In Yoshua Bengio and Yann LeCun (eds.), \emph{3rd International
  Conference on Learning Representations, {ICLR} 2015, San Diego, CA, USA, May
  7-9, 2015, Conference Track Proceedings}, 2015.

\bibitem[Kourentzes(2013)]{KN13}
Nikolaos Kourentzes.
\newblock Intermittent demand forecasts with neural networks.
\newblock \emph{International Journal of Production Economics}, 143\penalty0
  (1):\penalty0 198--206, 2013.

\bibitem[Li et~al.(2019)Li, Jin, Xuan, Zhou, Chen, Wang, and Yan]{SL19}
Shiyang Li, Xiaoyong Jin, Yao Xuan, Xiyou Zhou, Wenhu Chen, Yu{-}Xiang Wang,
  and Xifeng Yan.
\newblock Enhancing the locality and breaking the memory bottleneck of
  transformer on time series forecasting.
\newblock In Hanna~M. Wallach, Hugo Larochelle, Alina Beygelzimer, Florence
  d'Alch{\'{e}}{-}Buc, Emily~B. Fox, and Roman Garnett (eds.), \emph{Advances
  in Neural Information Processing Systems 32: Annual Conference on Neural
  Information Processing Systems 2019, NeurIPS 2019, December 8-14, 2019,
  Vancouver, BC, Canada}, pp.\  5244--5254, 2019.

\bibitem[Lim \& Zohren(2021)Lim and Zohren]{LB21b}
Bryan Lim and Stefan Zohren.
\newblock Time-series forecasting with deep learning: a survey.
\newblock \emph{Philosophical Transactions of the Royal Society A},
  379\penalty0 (2194):\penalty0 20200209, 2021.

\bibitem[Lim et~al.(2021)Lim, Ar{\i}k, Loeff, and Pfister]{LB21}
Bryan Lim, Sercan~{\"O} Ar{\i}k, Nicolas Loeff, and Tomas Pfister.
\newblock Temporal fusion transformers for interpretable multi-horizon time
  series forecasting.
\newblock \emph{International Journal of Forecasting}, 37\penalty0
  (4):\penalty0 1748--1764, 2021.

\bibitem[Liu et~al.(2022{\natexlab{a}})Liu, Yu, Liao, Li, Lin, Liu, and
  Dustdar]{SL22}
Shizhan Liu, Hang Yu, Cong Liao, Jianguo Li, Weiyao Lin, Alex~X. Liu, and
  Schahram Dustdar.
\newblock Pyraformer: Low-complexity pyramidal attention for long-range time
  series modeling and forecasting.
\newblock In \emph{The Tenth International Conference on Learning
  Representations, {ICLR} 2022, Virtual Event, April 25-29, 2022}.
  OpenReview.net, 2022{\natexlab{a}}.

\bibitem[Liu et~al.(2022{\natexlab{b}})Liu, Wu, Wang, and Long]{LY22}
Yong Liu, Haixu Wu, Jianmin Wang, and Mingsheng Long.
\newblock Non-stationary transformers: Exploring the stationarity in time
  series forecasting.
\newblock In \emph{Advances in Neural Information Processing Systems},
  2022{\natexlab{b}}.

\bibitem[Makridakis et~al.(2022)Makridakis, Spiliotis, and
  Assimakopoulos]{SM22}
Spyros Makridakis, Evangelos Spiliotis, and Vassilios Assimakopoulos.
\newblock M5 accuracy competition: Results, findings, and conclusions.
\newblock \emph{International Journal of Forecasting}, 38\penalty0
  (4):\penalty0 1346--1364, 2022.
\newblock ISSN 0169-2070.
\newblock \doi{https://doi.org/10.1016/j.ijforecast.2021.11.013}.
\newblock Special Issue: M5 competition.

\bibitem[Nie et~al.(2023)Nie, H.~Nguyen, Sinthong, and Kalagnanam]{YN23}
Yuqi Nie, Nam H.~Nguyen, Phanwadee Sinthong, and Jayant Kalagnanam.
\newblock A time series is worth 64 words: Long-term forecasting with
  transformers.
\newblock In \emph{International Conference on Learning Representations}, 2023.

\bibitem[Oreshkin et~al.(2020)Oreshkin, Carpov, Chapados, and Bengio]{BO20}
Boris~N. Oreshkin, Dmitri Carpov, Nicolas Chapados, and Yoshua Bengio.
\newblock {N-BEATS:} neural basis expansion analysis for interpretable time
  series forecasting.
\newblock In \emph{8th International Conference on Learning Representations,
  {ICLR} 2020, Addis Ababa, Ethiopia, April 26-30, 2020}. OpenReview.net, 2020.

\bibitem[Rangapuram et~al.(2018)Rangapuram, Seeger, Gasthaus, Stella, Wang, and
  Januschowski]{RS18}
Syama~Sundar Rangapuram, Matthias~W Seeger, Jan Gasthaus, Lorenzo Stella,
  Yuyang Wang, and Tim Januschowski.
\newblock Deep state space models for time series forecasting.
\newblock \emph{Advances in neural information processing systems}, 31, 2018.

\bibitem[Salinas et~al.(2020)Salinas, Flunkert, Gasthaus, and
  Januschowski]{SD20}
David Salinas, Valentin Flunkert, Jan Gasthaus, and Tim Januschowski.
\newblock Deepar: Probabilistic forecasting with autoregressive recurrent
  networks.
\newblock \emph{International Journal of Forecasting}, 36\penalty0
  (3):\penalty0 1181--1191, 2020.

\bibitem[Tatanov et~al.(2022)Tatanov, Beliaev, and Ginsburg]{TO22}
Oktai Tatanov, Stanislav Beliaev, and Boris Ginsburg.
\newblock Mixer-tts: non-autoregressive, fast and compact text-to-speech model
  conditioned on language model embeddings.
\newblock In \emph{ICASSP 2022-2022 IEEE International Conference on Acoustics,
  Speech and Signal Processing (ICASSP)}, pp.\  7482--7486. IEEE, 2022.

\bibitem[Tolstikhin et~al.(2021)Tolstikhin, Houlsby, Kolesnikov, Beyer, Zhai,
  Unterthiner, Yung, Steiner, Keysers, Uszkoreit, Lucic, and Dosovitskiy]{IT21}
Ilya~O. Tolstikhin, Neil Houlsby, Alexander Kolesnikov, Lucas Beyer, Xiaohua
  Zhai, Thomas Unterthiner, Jessica Yung, Andreas Steiner, Daniel Keysers,
  Jakob Uszkoreit, Mario Lucic, and Alexey Dosovitskiy.
\newblock Mlp-mixer: An all-mlp architecture for vision.
\newblock In Marc'Aurelio Ranzato, Alina Beygelzimer, Yann~N. Dauphin, Percy
  Liang, and Jennifer~Wortman Vaughan (eds.), \emph{Advances in Neural
  Information Processing Systems 34: Annual Conference on Neural Information
  Processing Systems 2021, NeurIPS 2021, December 6-14, 2021, virtual}, pp.\
  24261--24272, 2021.

\bibitem[Vaswani et~al.(2017)Vaswani, Shazeer, Parmar, Uszkoreit, Jones, Gomez,
  Kaiser, and Polosukhin]{VA17}
Ashish Vaswani, Noam Shazeer, Niki Parmar, Jakob Uszkoreit, Llion Jones,
  Aidan~N Gomez, {\L}ukasz Kaiser, and Illia Polosukhin.
\newblock Attention is all you need.
\newblock \emph{Advances in neural information processing systems}, 30, 2017.

\bibitem[Wen et~al.(2017)Wen, Torkkola, Narayanaswamy, and Madeka]{WR17}
Ruofeng Wen, Kari Torkkola, Balakrishnan Narayanaswamy, and Dhruv Madeka.
\newblock A multi-horizon quantile recurrent forecaster.
\newblock \emph{arXiv preprint arXiv:1711.11053}, 2017.

\bibitem[Wu et~al.(2021)Wu, Xu, Wang, and Long]{HW21}
Haixu Wu, Jiehui Xu, Jianmin Wang, and Mingsheng Long.
\newblock Autoformer: Decomposition transformers with auto-correlation for
  long-term series forecasting.
\newblock In Marc'Aurelio Ranzato, Alina Beygelzimer, Yann~N. Dauphin, Percy
  Liang, and Jennifer~Wortman Vaughan (eds.), \emph{Advances in Neural
  Information Processing Systems 34: Annual Conference on Neural Information
  Processing Systems 2021, NeurIPS 2021, December 6-14, 2021, virtual}, pp.\
  22419--22430, 2021.

\bibitem[Xiong et~al.(2020)Xiong, Yang, He, Zheng, Zheng, Xing, Zhang, Lan,
  Wang, and Liu]{RX20}
Ruibin Xiong, Yunchang Yang, Di~He, Kai Zheng, Shuxin Zheng, Chen Xing,
  Huishuai Zhang, Yanyan Lan, Liwei Wang, and Tie{-}Yan Liu.
\newblock On layer normalization in the transformer architecture.
\newblock In \emph{Proceedings of the 37th International Conference on Machine
  Learning, {ICML} 2020, 13-18 July 2020, Virtual Event}, volume 119 of
  \emph{Proceedings of Machine Learning Research}, pp.\  10524--10533. {PMLR},
  2020.

\bibitem[Zeng et~al.(2023)Zeng, Chen, Zhang, and Xu]{AZ22}
Ailing Zeng, Muxi Chen, Lei Zhang, and Qiang Xu.
\newblock Are transformers effective for time series forecasting?
\newblock In \emph{Proceedings of the AAAI Conference on Artificial
  Intelligence}, 2023.

\bibitem[Zhang \& Qi(2005)Zhang and Qi]{ZG05}
G~Peter Zhang and Min Qi.
\newblock Neural network forecasting for seasonal and trend time series.
\newblock \emph{European journal of operational research}, 160\penalty0
  (2):\penalty0 501--514, 2005.

\bibitem[Zhang et~al.(1998)Zhang, Patuwo, and Hu]{ZG98}
Guoqiang Zhang, B~Eddy Patuwo, and Michael~Y Hu.
\newblock Forecasting with artificial neural networks:: The state of the art.
\newblock \emph{International journal of forecasting}, 14\penalty0
  (1):\penalty0 35--62, 1998.

\bibitem[Zhang \& Nawata(2018)Zhang and Nawata]{ZJ18}
J~Zhang and K~Nawata.
\newblock Multi-step prediction for influenza outbreak by an adjusted long
  short-term memory.
\newblock \emph{Epidemiology \& Infection}, 146\penalty0 (7):\penalty0
  809--816, 2018.

\bibitem[Zhang(2023)]{ZT23}
Tong Zhang.
\newblock \emph{Mathematical analysis of machine learning algorithms}.
\newblock Cambridge University Press, 2023.

\bibitem[Zheng et~al.(2022)Zheng, Dang, Peng, Yang, and Gao]{ZY22}
Yu~Zheng, Zhangxuan Dang, Chunlei Peng, Chao Yang, and Xinbo Gao.
\newblock Multi-view multi-label anomaly network traffic classification based
  on mlp-mixer neural network.
\newblock \emph{arXiv preprint arXiv:2210.16719}, 2022.

\bibitem[Zhou et~al.(2021)Zhou, Zhang, Peng, Zhang, Li, Xiong, and Zhang]{HZ21}
Haoyi Zhou, Shanghang Zhang, Jieqi Peng, Shuai Zhang, Jianxin Li, Hui Xiong,
  and Wancai Zhang.
\newblock Informer: Beyond efficient transformer for long sequence time-series
  forecasting.
\newblock In \emph{Thirty-Fifth {AAAI} Conference on Artificial Intelligence,
  {AAAI} 2021, Thirty-Third Conference on Innovative Applications of Artificial
  Intelligence, {IAAI} 2021, The Eleventh Symposium on Educational Advances in
  Artificial Intelligence, {EAAI} 2021, Virtual Event, February 2-9, 2021},
  pp.\  11106--11115. {AAAI} Press, 2021.

\bibitem[Zhou et~al.(2022{\natexlab{a}})Zhou, Ma, Wang, Wen, Sun, Yao, Yin, and
  Jin]{TZ22b}
Tian Zhou, Ziqing Ma, Xue Wang, Qingsong Wen, Liang Sun, Tao Yao, Wotao Yin,
  and Rong Jin.
\newblock Film: Frequency improved legendre memory model for long-term time
  series forecasting.
\newblock In \emph{Advances in Neural Information Processing Systems},
  2022{\natexlab{a}}.

\bibitem[Zhou et~al.(2022{\natexlab{b}})Zhou, Ma, Wen, Wang, Sun, and
  Jin]{TZ22}
Tian Zhou, Ziqing Ma, Qingsong Wen, Xue Wang, Liang Sun, and Rong Jin.
\newblock Fedformer: Frequency enhanced decomposed transformer for long-term
  series forecasting.
\newblock In Kamalika Chaudhuri, Stefanie Jegelka, Le~Song, Csaba
  Szepesv{\'{a}}ri, Gang Niu, and Sivan Sabato (eds.), \emph{International
  Conference on Machine Learning, {ICML} 2022, 17-23 July 2022, Baltimore,
  Maryland, {USA}}, volume 162 of \emph{Proceedings of Machine Learning
  Research}, pp.\  27268--27286. {PMLR}, 2022{\natexlab{b}}.

\end{thebibliography}
\bibliographystyle{tmlr}

\clearpage
\appendix

\section{Proof of Theorem~\ref{thm:smooth}}
\label{appendix:proof}
\smooth*

\begin{proof}
Without loss of generality, we assume the lookback window starts at $t = 1$ and the historical values is $\boldsymbol{x} \in \mathbb{R}^L$. The ground truth of the future time series:
\begin{equation*}
    y_i = x(L + i) = x(P + 1 + i) = g(P + 1 + i) + f(P + 1 + i) = g(1 + i) + f(P + 1 + i)
\end{equation*}
Let $\boldsymbol{A} \in \mathbb{R}^{T \times (P+1)}$, and
\begin{equation*}
    \boldsymbol{A}_{ij} =
    \begin{cases}
        1, & \text{if $j = P + 1$ or $j = (i \bmod P) + 1$} \\
        -1, & \text{if $j = 1$} \\
        0, & \text{otherwise}
    \end{cases}, \boldsymbol{b}_i = 0
\end{equation*}
Then
\begin{align*}
    \hat{y}_i &= \boldsymbol{A}_i\boldsymbol{x} + \boldsymbol{b} \\
    &= x_{(i \bmod P) + 1} - x_1 + x_{P+1} \\
    &= x((i \bmod P) + 1) - x(1) + x(P+1)
\end{align*}
So we have:
\begin{align*}
    y_i - \hat{y}_i &= x(P+i+1) - x((i \bmod P) + 1) + x(1) - x(P+1) \\
    &= \left( x(P+i+1) - x((i \bmod P) + 1) \right) + \left( x(1)  - x(P+1) \right) \\
    &= f(P+i+1) - f((i \bmod P) + 1) + g(P+i+1) - g((i \bmod P) + 1) \\
    &+ f(1) - f(P+1) + g(1) - g(P+1) \\
    &= (f(P+i+1) - f(P+1)) - (f((i \bmod P) + 1) - f(1))
\end{align*}
And the mean absolute error between $y_i$ and $\hat{y}_i$ would be:
\begin{align*}
    |y_i - \hat{y}_i| &= |(f(P+i+1) - f(P+1)) - (f((i \bmod P) + 1) - f(1))| \\
    &\leq |f(P+i+1) - f(P+1)| + |f((i \bmod P) + 1) - f(1)| \\
    &\leq K|(P+i+1) - (P+1)| + K|(i \bmod P + 1) - 1| \\
    &\leq K(i + \min(i, P))
\end{align*}

\end{proof}

\section{Implementation Details}
\label{appendix:imp_detail}
\subsection{Normalization}
There are three types of normalizations used in the implementation:
\begin{enumerate}
    \item Global normalization: Global normalization standardizes all variates of time series independently as a data pre-processing. The standardized data is then used for training and evaluation. It is a common setup in long-term time series forecasting experiments to prevent from the affects of different variate scales. For M5, since there is only one target time series (sales), we do not apply the global normalization.
    \item Local normalization: In contrast to global normalization, local normalization is applied on each batch as pre-processing or post-processing. For long-term forecasting datasets, we apply reversible instance normalization~\citep{TK22} to ensure a fair comparison with the state-of-the-art results. In M5, we independently scale the sales of all products by their mean for model input and re-scale the model output.
    \item Model-level normalization: We apply batch normalization on long-term forecasting datasets as suggested in~\citep{YN23} and apply layer normalization on M5 as described below.
\end{enumerate}

\subsection{Differences between TSMixer and TSMixer-Ext}
Due to the different normalizations between long-term forecasting benchmarks and M5, we slightly modify the mixing layers in TSMixer-Ext to better fit M5.
We consider post-normalization rather than pre-normalization~\citep{RX20} because pre-normalization may lead to NaN when the scale of input is too large.
Furthermore, we apply layer normalization instead of batch normalization because batch normalization requires much larger batch size to obtain stable statistics of M5.
The resulting architecture is shown in Fig.~\ref{fig:m5_arch}.

\begin{figure}[htb]
\begin{center}
\centerline{\includegraphics[width=\textwidth]{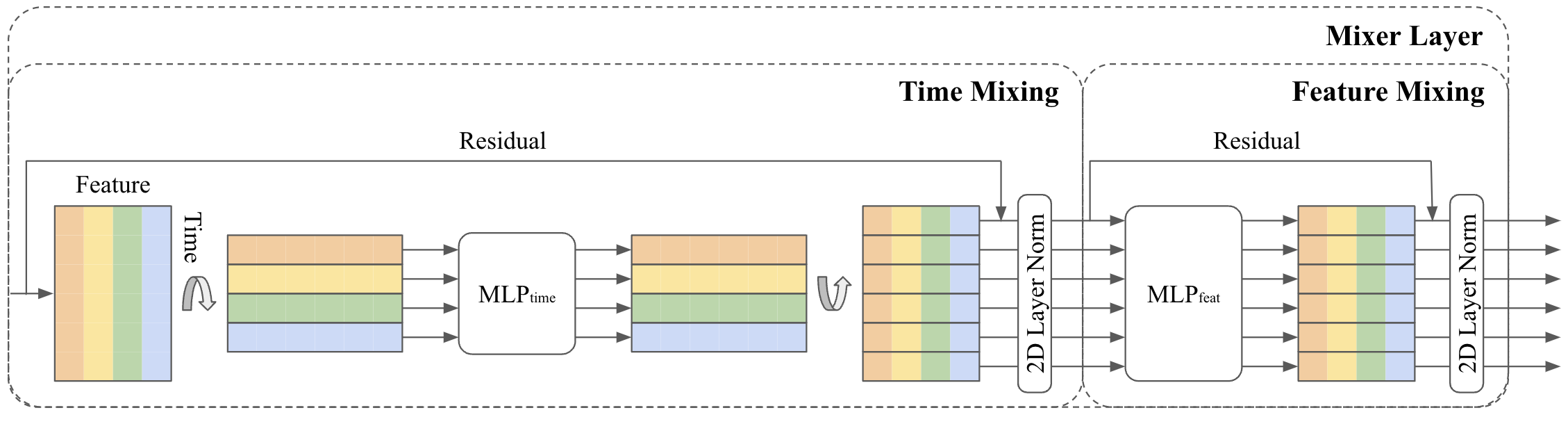}}
\caption{Mixing layers in TSMixer-Ext.}
\label{fig:m5_arch}
\end{center}
\end{figure}

\revision{
\subsection{Formulae of TSMixer architecture}
\label{appendix:formula}

In this section, we provide the mathematical formulae of each components in TSMixer.

\subsubsection{TSMixer Components}
The TSMixer architecture is composed of several key components, which are implemented using a combination of linear layers, nonlinear activation functions, dropout, normalization, and residual connections. These are all standard deep learning operations that are commonly used. 
%This makes TSMixer easy to implement. 
The major components of TSMixer are:
\begin{enumerate}
    \setlength{\itemsep}{1pt}
    \item \textbf{Temporal Projection} and \textbf{Time Mixing}, which are used to model transformations between time steps.
    \item \textbf{Feature Mixing}, which is used to model feature transformations.
    \item \textbf{Conditional Feature Mixing}, which is used to transform hidden features based on the static features $\boldsymbol{S}$.
    \item \textbf{Mixer Layer}, which is the composition of the Time Mixing and the Feature Mixing.
    \item \textbf{Conditional Mixer Layer}, which is the composition of the Time Mixing and the Conditional Feature Mixing.
\end{enumerate}
For the layers involving the change of output size, we use subscripts $A \rightarrow B$ denotes the size is changing from $A$ to $B$.

\paragraph{Temporal Projection}
Given an input matrix $\boldsymbol{X} \in \mathbb{R}^{L \times C}$, the Temporal Projection (TP) is a linear layer that acts on the columns of $\boldsymbol{X}$ (denoted as $\boldsymbol{X}_{*, i}$) and is shared across all columns to project the time series from the input length to the prediction length.
The operation is defined as:
\begin{equation}
\begin{split}
    \text{TP}_{L \rightarrow T}(\boldsymbol{X})_{*, i} &= \boldsymbol{W}_1 \boldsymbol{X}_{*, i} + \boldsymbol{b}_1, \forall i = 1, \dots, C,
\end{split}
\end{equation}
where $\boldsymbol{W}_1 \in \mathbb{R}^{L \times T}$ and $\boldsymbol{b}_1 \in \mathbb{R}^T$ are the weights and biases of the linear layer respectively.
The subscript $L \rightarrow T$ denotes the mapping between input and output dimensions.

\paragraph{Time Mixing}
Similar to the Temporal Projection, the Time Mixing (TM) acts on all columns of $\boldsymbol{X}$ and applies commonly used deep learning layers to perform temporal feature transformation. The operation is defined as:
\begin{multline}
    \text{TM}(\boldsymbol{X})_{*, i} = \\
    \text{Norm} \left( \boldsymbol{X}_{*, i} + \text{Drop} \left( \sigma \left( \text{TP}_{L \rightarrow L} \left( \boldsymbol{X} \right)_{*, i} \right) \right) \right), \\
    \forall i = 1, \dots, C,
\end{multline}
where $\sigma(\cdot)$ is an activation function, $\text{Drop}(\cdot)$ is dropout and $\text{Norm}(\cdot)$ can be layer normalization or batch normalization.
It is important to note that the normalization is applied on the entire matrix (along both time and feature domain), rather than row-by-row (along the feature domain) as in Transformer-based models.
The TM block allows TSMixer to effectively capture temporal dependencies in the time series data.

\paragraph{Feature Mixing}
The Feature Mixing (FM) is a two-layer residual MLP that acts on the rows of the input matrix $\boldsymbol{X} \in \mathbb{R}^{L \times C}$ and is shared across all rows.
The block is designed to model feature transformations and is applied to each row $\boldsymbol{X}_{j, *}$ of the input matrix.
The operation is defined as:
\begin{gather*}
    \boldsymbol{U}_{j, *} =  \text{Drop} \left( \sigma \left( \boldsymbol{W}_2  \boldsymbol{X}_{j, *} + \boldsymbol{b}_2 \right) \right), \\
    \text{FM}_{C \rightarrow C}(\boldsymbol{X})_{j, *} = \text{Norm} \left(\boldsymbol{X}_{j, *} +\text{Drop} \left(  \boldsymbol{W}_3 \boldsymbol{U}_{j, *} + \boldsymbol{b}_3 \right)\right),\\
    \forall j = 1, \dots, L, \label{eq:fr}
\end{gather*}
where $\boldsymbol{W}_2, \boldsymbol{W}_3 \in \mathbb{R}^{C \times C}$ and $\boldsymbol{b}_2, \boldsymbol{b}_3 \in \mathbb{R}^C$.

When it is necessary to project the features to a different size $H$ ($H \neq C$), TSMixer applies a linear transformation to the residual term:
\begin{gather*}
    \text{FM}_{C \rightarrow H}(\boldsymbol{X})_{j, *} = \text{Norm} \left(\boldsymbol{W}_H\boldsymbol{X}_{j, *} + \boldsymbol{b}_H +\text{Drop} \left(  \boldsymbol{W}_3 \boldsymbol{U}_{j, *} + \boldsymbol{b}_3 \right)\right), \\
    \forall j = 1, \dots, L,
\end{gather*}
where $\boldsymbol{W}_3, \boldsymbol{W}_H \in \mathbb{R}^{H \times C}, \boldsymbol{b}_3, \boldsymbol{b}_H \in \mathbb{R}^{H}$.

\paragraph{Conditional Feature Mixing}
The Conditional Feature Mixing (CFM) is a variation of the FM block that takes into account an associated static feature $\boldsymbol{S} \in \mathbb{R}^{1 \times C_s}$ in addition to the input sequence $\boldsymbol{X} \in \mathbb{R}^{L \times H}$.
The block is designed to transform hidden features depending on the static features.
The operation is defined as:
\begin{align}
    & \boldsymbol{V}_{j, *} = \text{FR}_{C_s \rightarrow H}(\text{Expand}_L(\boldsymbol{S})) \nonumber \\
    &\text{CFM}_{C \rightarrow H}(\boldsymbol{X}, \boldsymbol{S})_{j, *} = \text{FM}_{C+H \rightarrow H}(\boldsymbol{X} \oplus \boldsymbol{V})_{j, *} \\
    &\forall j = 1, \dots, L,
\end{align}
where $\text{Expand}_L(\cdot)$ expands the input along the time dimension by repeating it $L$ times, $\boldsymbol{V} \in \mathbb{R}^{L \times H}$ and $\boldsymbol{X} \oplus \boldsymbol{V} \in \mathbb{R}^{L \times (C + H)}$ is the concatenation of $\boldsymbol{X}$ and $\boldsymbol{V}$ along the feature dimension.

\paragraph{Mixer Layer and Conditional Mixer Layer}
The Mixer Layer (Mix) is a composition of the Time Mixing and Feature Mixing, whereas the Conditional Mixer Layer (CMix) is a composition of the Time Mixing and Conditional Feature Mixing. Both Mix and CMix blocks apply the temporal and feature transformations respectively:
\begin{align}
\text{Mix}_{C \rightarrow H}(\boldsymbol{X}) &= \text{FR}_{C \rightarrow H} \left( \text{TR}_{L \rightarrow L} (\boldsymbol{X}) \right) \nonumber \\
\text{CMix}_{C \rightarrow H}(\boldsymbol{X}, \boldsymbol{S}) &= \text{CFR}_{C \rightarrow H} \left( \text{TR}_{L \rightarrow L} (\boldsymbol{X}), \boldsymbol{S} \right). \nonumber \\
\end{align}

\subsubsection{Basic TSMixer for Multivariate Time Series Forecasting}
For long-term time series forecasting (LTSF) tasks, TSMixer only uses the historical target time series $\boldsymbol{X}$ as input.
A series of mixer blocks are applied to project the input data to a latent representation of size $C$.
The final output is then projected to the prediction length $T$:
\begin{align}
    \boldsymbol{O}_1 &= \text{Mix}_{C \rightarrow C}(\boldsymbol{X}) \nonumber \\
    \boldsymbol{O}_k &= \text{Mix}_{C \rightarrow C}(\boldsymbol{O}_{k-1}), \forall k = 2, \dots, K \nonumber \\
    \hat{\boldsymbol{Y}} &= \text{TP}_{L \rightarrow T}(\boldsymbol{O}_K) \nonumber
\end{align}
where $\boldsymbol{O}_k$ is the latent representation of the $k$-th mixer block and $\hat{\boldsymbol{Y}}$ is the prediction.
We project the sequence to length $T$ after the mixer blocks as $T$ may be quite long in LTSF tasks.
To increase the model capacity, we modify the hidden layers in Feature Mixing by using $\boldsymbol{W}_2 \in \mathbb{R}^{H \times C}, \boldsymbol{W}_3 \in \mathbb{R}^{C \times H}, \boldsymbol{b}_2 \in \mathbb{R}^{H}, \boldsymbol{b}_3 \in \mathbb{R}^{C}$ in Eq.~\eqref{eq:fr}, where $H$ is a hyper-parameter indicating the hidden size.
Another modification is using pre-normalization~\citep{RX20} instead of post-normalization in residual blocks to keep the input scale.

\subsubsection{Extended TSMixer for Time Series Forecasting with Auxiliary Information}
Given input data consisting of a target time series $\boldsymbol{X} \in \mathbb{R}^{L \times C}$, historical features $\hat{\boldsymbol{X}} \in \mathbb{R}^{L \times C_x}$, apriori known future features $\boldsymbol{Z} \in \mathbb{R}^{T \times C_z}$, and static features $\boldsymbol{S} \in \mathbb{R}^{1 \times C_s}$, TSMixer applies a series of conditional feature mixing and conditional mixer layers to project the input data to a latent representation of size $H$. 
The operation of a the architecture consisting $K$ blocks is defined as:
\begin{align}
    \boldsymbol{X}' &= \text{CFM}_{C+C_x \rightarrow H}(\text{TL}_{L \rightarrow T}(\boldsymbol{X} \oplus \hat{\boldsymbol{X}}),  \boldsymbol{S}) \nonumber \\
    \boldsymbol{Z}' &= \text{CFM}_{C_z \rightarrow H}(\boldsymbol{Z}, \boldsymbol{S}) \nonumber \\
    \boldsymbol{O}_1 &= \text{CMix}_{2H \rightarrow H}(\boldsymbol{X}' \oplus \boldsymbol{Z}', \boldsymbol{S}) \nonumber \\
    \boldsymbol{O}_k &= \text{CMix}_{H \rightarrow H}(\boldsymbol{O}_{k-1}, \boldsymbol{S}), \forall k = 2, \dots, K \nonumber
\end{align}
where $\boldsymbol{X}' \in \mathbb{R}^{T \times H}$ is the latent representation of all past information projected to the prediction length, $\boldsymbol{Z}' \in \mathbb{R}^{T \times H}$ is the latent representation of future features, $\boldsymbol{O}_k \in \mathbb{R}^{T \times H}$ is the output of the $k$-th mixer block.
The final output, $\boldsymbol{O}_K$, is then linearly projected to the prediction space, which can be real values or the parameters of a probability distribution (e.g. negative binomial distribution that is commonly used for demand prediction~\citep{SD20}).
}

\section{Experimental Setup}
\label{appendix:exp_detail}

\subsection{Long-term time series forecasting datasets}

For the long-term forecasting datasets (ETTm2, Weather, Electricity, and Traffic), we use publicly-available data that have been pre-processed by~\citet{HW21}, and we follow experimental settings used in recent papers~\citep{LY22,TZ22b,YN23}.
Specifically, we standardize each covariate independently and do not re-scale the data when evaluating the performance.
We train each model with a maximum 100 epochs and do early stopping if the validation loss is not improved after 5 epochs.

\subsection{M5 dataset}
We obtain the M5 dataset from Kaggle\footnote{\url{https://www.kaggle.com/competitions/m5-forecasting-accuracy/data}}.
Please refer to the participants guide to check the details about the competition and the dataset.
We refer to the example script in GluonTS~\citep{gluonts_jmlr}\footnote{\url{https://github.com/awslabs/gluonts/blob/dev/examples/m5_gluonts_template.ipynb}} and the repository of the third place solution\footnote{\url{https://github.com/devmofl/M5_Accuracy_3rd}} in the competition to implement our basic feature engineering.
We list the features we used in our experiment in Table~\ref{table:m5_feature}.

\begin{table}[!htb]
\centering
\caption{Static features and time-varying features used in our experiments.}
\label{table:m5_feature}
\begin{tabular}{@{}l|l@{}}
\toprule
Static features & Time-varying features \\ \midrule
state\_id & snap\_CA \\
store\_id & snap\_TX \\
category\_id & snap\_WI \\
department\_id & event\_type\_1 \\
item\_id & event\_type\_2 \\
mean\_sales & normalized\_price\_per\_item \\
 & normalized\_price\_per\_group \\
 & day\_of\_week \\
 & day\_of\_month \\
 & day\_of\_year \\
 & sales (prediction target, only available in history) \\ \bottomrule
\end{tabular}
\end{table}

Our implementation is based on GluonTS.
We use TFT and DeepAR provided in GluonTS, and implement PatchTST, FEDformer, and our TSMixer ourselves.
We modified these models if necessary to optimize the negative binomial distribution, as suggested by DeepAR paper~\citep{SD20}.
We train each model with a maximum 300 epochs and employ early stopping if the validation loss is not improved after 30 epochs.
We noticed that optimizing other objective function might get significantly worse results when evaluate WRMSSE.
To obtain more stable results, for all models, we take the top 8 hyperparameter settings based on validation WRMSSE and train them for an additional 4 trials (totaling 5 trials) and select the best hyperparameters based on their mean validation WRMSSE, then report the evaluation results on the test set.
The hyperparameter settings can be found in Appendix~\ref{appendix:best_hp}.

\section{Effects of Lookback Window Size}
We show the effects of different lookback window size $L = \{96, 336, 512, 720\}$ with the prediction length $T = \{96, 192, 336, 720\}$ on ETTm2, Weather, Electricity, and Traffic. The results are shown in Fig.~\ref{fig:seq_len_full}.

\label{appendix:seq_len}
\begin{figure}[!htb]
\begin{center}
\centerline{\includegraphics[width=\linewidth]{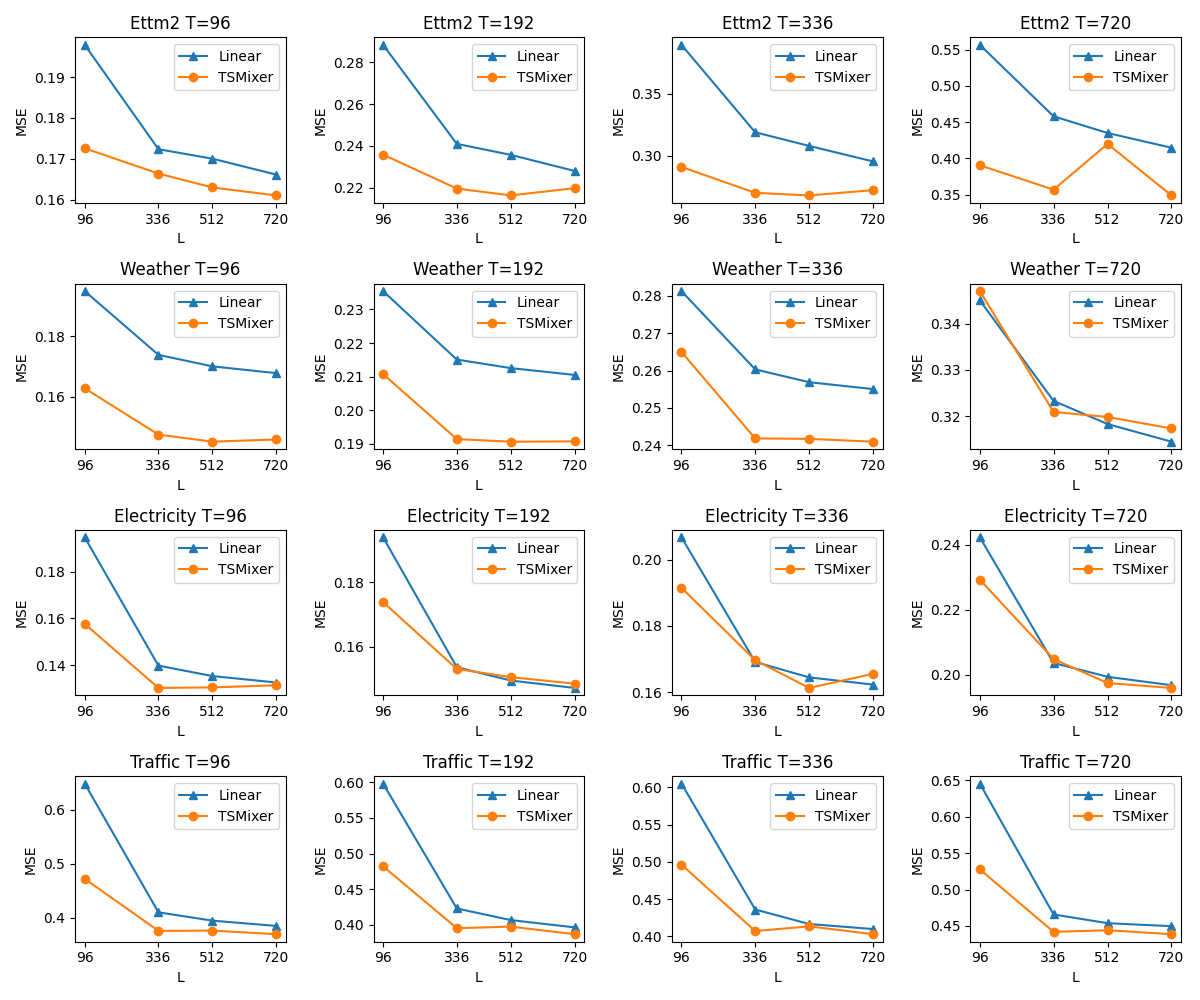}}
\caption{Effects of lookback window size on TSMixer.}
\label{fig:seq_len_full}
\end{center}
\end{figure}

\section{Hyperparameters}
\label{appendix:best_hp}

\begin{longtable}{l|c|lllll}
\caption{Hyerparamter tuning spaces and best configurations for TSMixer and TFT on long-term forecasting benchmarks.} \\
\hline
 & \multicolumn{1}{l|}{} & \multicolumn{5}{c}{ETTh1} \\ \hline
\endfirsthead
\endhead
\hline
\endfoot
\endlastfoot
Search space & \multicolumn{1}{l|}{} & \multicolumn{1}{c}{Learing rate} & \multicolumn{1}{c}{Blocks} & \multicolumn{1}{c}{Dropout} & \multicolumn{1}{c}{Hidden size} & \multicolumn{1}{c}{Heads} \\ \hline
Model & $T$ & \multicolumn{1}{c}{0.001, 0.0001} & \multicolumn{1}{c}{1, 2, 4, 6, 8} & \multicolumn{1}{c}{0.1, 0.3, 0.5, 0.7, 0.9} & \multicolumn{1}{c}{8, 16, 32, 64} & \multicolumn{1}{c}{4, 8} \\ \hline
\multirow{4}{*}{TSMixer} & 96 & 0.0001 & 6 & 0.9 & 512 & relu \\
 & 192 & 0.001 & 4 & 0.9 & 256 & relu \\
 & 336 & 0.001 & 4 & 0.9 & 256 & relu \\
 & 720 & 0.001 & 2 & 0.9 & 64 & relu \\ \hline
\multirow{4}{*}{TFT} & 96 & 0.001 &  & 0.3 &  &  \\
 & 192 & 0.001 &  & 0.1 &  &  \\
 & 336 & 0.001 &  & 0.1 &  &  \\
 & 720 & 0.001 &  & 0.1 &  &  \\ \hline
 & \multicolumn{1}{l|}{} & \multicolumn{5}{c}{ETTh2} \\ \hline
Search space & \multicolumn{1}{l|}{} & \multicolumn{1}{c}{Learing rate} & \multicolumn{1}{c}{Blocks} & \multicolumn{1}{c}{Dropout} & \multicolumn{1}{c}{Hidden size} & \multicolumn{1}{c}{Heads} \\ \hline
Model & $T$ & \multicolumn{1}{c}{0.001, 0.0001} & \multicolumn{1}{c}{1, 2, 4, 6, 8} & \multicolumn{1}{c}{0.1, 0.3, 0.5, 0.7, 0.9} & \multicolumn{1}{c}{8, 16, 32, 64} & \multicolumn{1}{c}{4, 8} \\ \hline
\multirow{4}{*}{TSMixer} & 96 & 0.0001 & 4 & 0.9 & 8 & relu \\
 & 192 & 0.001 & 1 & 0.9 & 8 & relu \\
 & 336 & 0.0001 & 1 & 0.9 & 16 & relu \\
 & 720 & 0.0001 & 2 & 0.9 & 64 & relu \\ \hline
\multirow{4}{*}{TFT} & 96 & 0.0001 &  & 0.9 &  &  \\
 & 192 & 0.001 &  & 0.9 &  &  \\
 & 336 & 0.001 &  & 0.7 &  &  \\
 & 720 & 0.001 &  & 0.7 &  &  \\ \hline
 & \multicolumn{1}{l|}{} & \multicolumn{5}{c}{ETTm1} \\ \hline
Search space & \multicolumn{1}{l|}{} & \multicolumn{1}{c}{Learing rate} & \multicolumn{1}{c}{Blocks} & \multicolumn{1}{c}{Dropout} & \multicolumn{1}{c}{Hidden size} & \multicolumn{1}{c}{Heads} \\ \hline
Model & $T$ & \multicolumn{1}{c}{0.001, 0.0001} & \multicolumn{1}{c}{1, 2, 4, 6, 8} & \multicolumn{1}{c}{0.1, 0.3, 0.5, 0.7, 0.9} & \multicolumn{1}{c}{8, 16, 32, 64} & \multicolumn{1}{c}{4, 8} \\ \hline
\multirow{4}{*}{TSMixer} & 96 & 0.0001 & 6 & 0.9 & 16 & relu \\
 & 192 & 0.0001 & 4 & 0.9 & 32 & relu \\
 & 336 & 0.0001 & 4 & 0.9 & 64 & relu \\
 & 720 & 0.0001 & 4 & 0.9 & 16 & relu \\ \hline
\multirow{4}{*}{TFT} & 96 & 0.001 &  & 0.5 &  &  \\
 & 192 & 0.001 &  & 0.3 &  &  \\
 & 336 & 0.001 &  & 0.3 &  &  \\
 & 720 & 0.001 &  & 0.9 &  &  \\ \hline
 & \multicolumn{1}{l|}{} & \multicolumn{5}{c}{ETTm2} \\ \hline
Search space & \multicolumn{1}{l|}{} & \multicolumn{1}{c}{Learing rate} & \multicolumn{1}{c}{Blocks} & \multicolumn{1}{c}{Dropout} & \multicolumn{1}{c}{Hidden size} & \multicolumn{1}{c}{Heads} \\ \hline
Model & $T$ & \multicolumn{1}{c}{0.001, 0.0001} & \multicolumn{1}{c}{1, 2, 4, 6, 8} & \multicolumn{1}{c}{0.1, 0.3, 0.5, 0.7, 0.9} & \multicolumn{1}{c}{8, 16, 32, 64} & \multicolumn{1}{c}{4, 8} \\ \hline
\multirow{4}{*}{TSMixer} & 96 & 0.001 & 8 & 0.9 & 256 & relu \\
 & 192 & 0.0001 & 1 & 0.9 & 256 & relu \\
 & 336 & 0.0001 & 8 & 0.9 & 512 & relu \\
 & 720 & 0.0001 & 8 & 0.1 & 256 & relu \\ \hline
\multirow{4}{*}{TFT} & 96 & 0.0001 &  & 0.7 & 512 &  \\
 & 192 & 0.0001 &  & 0.3 & 256 &  \\
 & 336 & 0.0001 &  & 0.3 & 128 &  \\
 & 720 & 0.0001 &  & 0.1 & 512 &  \\ \hline
 & \multicolumn{1}{l|}{} & \multicolumn{5}{c}{Weather} \\ \hline
Search space & \multicolumn{1}{l|}{} & \multicolumn{1}{c}{Learing rate} & \multicolumn{1}{c}{Blocks} & \multicolumn{1}{c}{Dropout} & \multicolumn{1}{c}{Hidden size} & \multicolumn{1}{c}{Heads} \\ \hline
Model & $T$ & \multicolumn{1}{c}{0.001, 0.0001} & \multicolumn{1}{c}{1, 2, 4, 6, 8} & \multicolumn{1}{c}{0.1, 0.3, 0.5, 0.7, 0.9} & \multicolumn{1}{c}{8, 16, 32, 64} & \multicolumn{1}{c}{4, 8} \\ \hline
\multirow{4}{*}{TSMixer} & 96 & 0.0001 & 4 & 0.3 & 64 & relu \\
 & 192 & 0.0001 & 8 & 0.7 & 32 & relu \\
 & 336 & 0.0001 & 2 & 0.7 & 8 & relu \\
 & 720 & 0.0001 & 8 & 0.7 & 16 & relu \\ \hline
\multirow{4}{*}{TFT} & 96 & 0.001 & 2 & 0.9 & 64 &  \\
 & 192 & 0.001 & 1 & 0.1 & 32 &  \\
 & 336 & 0.001 & 1 & 0.1 & 32 &  \\
 & 720 & 0.001 & 2 & 0.7 & 64 &  \\ \hline
 & \multicolumn{1}{l|}{} & \multicolumn{5}{c}{Electricity} \\ \hline
Search space & \multicolumn{1}{l|}{} & \multicolumn{1}{c}{Learing rate} & \multicolumn{1}{c}{Blocks} & \multicolumn{1}{c}{Dropout} & \multicolumn{1}{c}{Hidden size} & \multicolumn{1}{c}{Heads} \\ \hline
Model & $T$ & \multicolumn{1}{c}{0.001, 0.0001} & \multicolumn{1}{c}{1, 2, 4, 6, 8} & \multicolumn{1}{c}{0.1, 0.3, 0.5, 0.7, 0.9} & \multicolumn{1}{c}{64, 128, 256, 512} & \multicolumn{1}{c}{4, 8} \\ \hline
\multirow{4}{*}{TSMixer} & 96 & 0.0001 & 6 & 0.7 & 32 & relu \\
 & 192 & 0.0001 & 8 & 0.7 & 16 & relu \\
 & 336 & 0.0001 & 6 & 0.7 & 64 & relu \\
 & 720 & 0.001 & 6 & 0.7 & 64 & relu \\ \hline
\multirow{4}{*}{TFT} & 96 & 0.0001 & 4 & 0.5 & 32 &  \\
 & 192 & 0.0001 & 6 & 0.9 & 8 &  \\
 & 336 & 0.0001 & 4 & 0.1 & 8 &  \\
 & 720 & 0.001 & 4 & 0.3 & 64 &  \\ \hline
 & \multicolumn{1}{l|}{} & \multicolumn{5}{c}{Traffic} \\ \hline
Search space & \multicolumn{1}{l|}{} & \multicolumn{1}{c}{Learing rate} & \multicolumn{1}{c}{Blocks} & \multicolumn{1}{c}{Dropout} & \multicolumn{1}{c}{Hidden size} & \multicolumn{1}{c}{Heads} \\ \hline
Model & $T$ & \multicolumn{1}{c}{0.001, 0.0001} & \multicolumn{1}{c}{1, 2, 4, 6, 8} & \multicolumn{1}{c}{0.1, 0.3, 0.5, 0.7, 0.9} & \multicolumn{1}{c}{64, 128, 256, 512} & \multicolumn{1}{c}{4, 8} \\ \hline
\multirow{4}{*}{TSMixer} & 96 & 0.0001 & 8 & 0.7 & 256 & relu \\
 & 192 & 0.0001 & 8 & 0.7 & 256 & relu \\
 & 336 & 0.0001 & 6 & 0.7 & 512 & relu \\
 & 720 & 0.0001 & 2 & 0.9 & 256 & relu \\ \hline
\multirow{4}{*}{TFT} & 96 & 0.001 & 4 & 0.3 & 64 &  \\
 & 192 & 0.001 & 4 & 0.9 & 64 &  \\
 & 336 & 0.001 & 6 & 0.7 & 128 &  \\
 & 720 & 0.0001 & 8 & 0.1 & 256 &  \\ \hline
\end{longtable}

\begin{table}[!htb]
\centering
\caption{Hyerparamter tuning spaces and best configurations for all models on M5.}
\begin{tabular}{@{}l|ccccc@{}}
\toprule
 & \multicolumn{5}{c}{M5} \\ \midrule
Search space & Learing rate & Blocks & Dropout & Hidden size & Heads \\ \midrule
Model & 0.001 & 1, 2, 3, 4 & 0, 0.05, 0.1, 0.3 & 64, 128, 256, 512 & 4, 8 \\ \midrule
PatchTST & 0.001 & 2 & 0 & 64 &  \\
Autoformer & 0.001 & 2 & 0 & 128 & \\
FEDformer & 0.001 & 1 & 0 & 256 &  \\
DeepAR & 0.001 & 2 & 0.05 & 256 &  \\
TFT & 0.001 & 1 & 0.05 & 64 & 4 \\
TSMixer & 0.001 & 2 & 0 & 64 &  \\ \bottomrule
\end{tabular}
\end{table}

\section{Alternatives to MLPs}
\label{appendix:alter}
In Section~\ref{sec:linear}, we discuss the advantages of linear models and their time-step-dependent characteristics.
In addition to linear models and the proposed TSMixer, there are other architectures whose weights are time-step-dependent.
In this section, we examine full MLP and Convolutional Neural Networks (CNNs), as alternatives to MLPs.
The building block of full MLPs applies linear operations on the vectorized input with $T \times C$ dimensions and  vectorized output with $L \times C$ dimensions.
%Full MLP use a matrix with vectorized input
%$T \times C$ rows and $L \times C$ columns to directly project $\boldsymbol{X}$ to $\hat{\boldsymbol{Y}}$. It can be understood as applying MLPs on the vectorized input with $T \times C$ dimensions. 
%However, it leads to significant parameter size and makes them infeasible for datasets with high dimensions.
As CNNs, we consider a 1-D convolution layer followed by a linear layer.
%The 1-D CNN can learn multiple filters to capture cross-variate information within a time window, making it a potential replacement for mixer blocks in TSMixer.

The results of this evaluation, conducted on the ETTm2 and Weather datasets, are presented in Table~\ref{table:alter}.
The results show that while full MLPs have the highest computation cost, they perform worse than both TSMixer and CNNs. 
On the other hand, the performance of CNNs is similar to TSMixer on the Weather dataset, but significantly worse on ETTm2, which is a more non-stationary dataset.
Compared with TSMixer, the main difference is both full MLPs and CNNs mix time and feature information simultaneously in each linear operation, while TSMixer alternatively conduct either time or feature mixing.
The alternative mixing allows TSMixer to use a large lookback window, which is favorable theoretically (Section~\ref{sec:linear}) and empirically (previous ablation), but also keep a reasonable number of parameters, which leads to better generalization.
On the other hand, the number of parameters of conventional MLPs and CNNs grow faster than TSMixer when increasing the window size $L$, which may suffer higher chance of overfitting than TSMixer. 
%These empirical results suggest the interleaving mixing in TSMixer leads to a better balance between performance and capacity when stacking deeper.

%We address the reason to the faster parameter number growth of full MLPs and CNNs when we consider larger $L$, which is favorable as discussed in Section~\ref{sec:linear}

%This suggests the alternative mixing between mixer architectue 
%This suggests that the mixer architecture in TSMixer, which considers more time steps at once, may lead to better generalization.

\begin{table}[tb]
\centering
\setlength\tabcolsep{3.3pt}
\caption{Comparison with other MLP-like alternatives.}
\label{table:alter}
\begin{tabular}{@{}cc|cc|cc|cc@{}}
\toprule
\multicolumn{2}{c|}{Models} & \multicolumn{2}{c|}{TSMixer} & \multicolumn{2}{c|}{Full MLP} & \multicolumn{2}{c}{CNN} \\ \midrule
\multicolumn{2}{c|}{Metric} & MSE & MAE & MSE & MAE & MSE & MAE \\ \midrule
\multicolumn{1}{c|}{\multirow{4}{*}{ETTm2}} & 96 & \textbf{0.163} & \textbf{0.252} & 0.441 & 0.486 & 0.232 & 0.334 \\
\multicolumn{1}{c|}{} & 192 & \textbf{0.216} & \textbf{0.290} & 1.028 & 0.755 & 0.323 & 0.410 \\
\multicolumn{1}{c|}{} & 336 & \textbf{0.268} & \textbf{0.324} & 1.765 & 1.049 & 0.616 & 0.593 \\
\multicolumn{1}{c|}{} & 720 & \textbf{0.420} & \textbf{0.422} & 2.724 & 1.305 & 2.009 & 1.214 \\ \midrule
\multicolumn{1}{c|}{\multirow{4}{*}{Weather}} & 96 & \textbf{0.145} & \textbf{0.198} & 0.190 & 0.279 & 0.149 & 0.220 \\
\multicolumn{1}{c|}{} & 192 & \textbf{0.191} & \textbf{0.242} & 0.250 & 0.338 & 0.194 & 0.263 \\
\multicolumn{1}{c|}{} & 336 & \textbf{0.242} & \textbf{0.280} & 0.298 & 0.375 & \textbf{0.242} & 0.306 \\
\multicolumn{1}{c|}{} & 720 & 0.320 & \textbf{0.336} & 0.360 & 0.422 & \textbf{0.293} & 0.355 \\ \bottomrule
\end{tabular}
\end{table}

\end{document}